\documentclass[10pt,twocolumn,letterpaper]{article}
\usepackage{soul}
\usepackage{tabularx}
\usepackage{booktabs} %
\usepackage{multirow}
\usepackage{makecell}
\usepackage{upgreek} %
\usepackage{adjustbox}

\usepackage[utf8]{inputenc}
\usepackage{pifont}
\usepackage{xcolor}           %
\usepackage{listings}         %
\usepackage{algorithm}        %
\usepackage{algpseudocode}    %
\usepackage{wrapfig}          %
\usepackage{caption}          %
\usepackage{graphicx}         %
\usepackage{float}            %
\usepackage{microtype}        %

\usepackage{graphicx, animate}
\usepackage{xcolor}

\usepackage[dvipsnames,table]{xcolor}

\usepackage{amsmath, amssymb,array} %
\usepackage{amsthm}
\newtheorem{theorem}{Theorem}
\usepackage{booktabs}         %

\newcolumntype{R}{>{$}r<{$}}
\newcolumntype{L}{>{$}l<{$}}
\newcolumntype{M}{R@{${}={}$}L}

\usepackage[pagenumbers]{cvpr}

\newcommand{\ourmodel}{%
\ifshowedits 
    \textcolor{blue}{\architecturename}%
\else
    \architecturename%
\fi
}

\newif\ifshowedits
\newcommand{\ours}{%
\ifshowedits 
    \textcolor{green}{\methodname}%
\else
    \methodname%
\fi
}
\newif\ifshowedits
\newcommand{\oursshort}{%
\ifshowedits 
    \textcolor{green}{\methodnameshort}%
\else
    \methodnameshort%
\fi
}
\newif\ifshowedits

\newif\ifshowedits

\newif\ifshowedits

\newif\ifshowedits

\NewCommandCopy{\myst}{\st}
\renewcommand{\st}[1]{\ifshowedits{\textcolor{red}{\myst{#1}}}\fi}

\newcommand{\code}[1]{%
    \texttt{\textcolor{black}{#1}}%
}

\definecolor{cvprblue}{rgb}{0.21,0.49,0.74}
\usepackage[pagebackref,breaklinks,colorlinks,allcolors=cvprblue]{hyperref}

\def\paperID{11899} %
\def\confName{CVPR}
\def\confYear{2026}

\title{\ours}

\author{Soumik Mukhopadhyay$^{*}$\\
    {\tt\small soumik@umd.edu}
\and
    Prateksha Udhayanan$^{*}$\\
    {\tt\small pudhayan@umd.edu}
\and
    Abhinav Shrivastava\\
    {\tt\small abhinav2@umd.edu}
\and
    University of Maryland, College Park\\
}

\begin{document}

\newcommand{\methodname}{Scale Space Diffusion}
\newcommand{\methodnameshort}{SSD}
\newcommand{\architecturename}{Flexi-UNet}

\maketitle

\definecolor{forestgreen}{RGB}{34,139,34}

\def\thefootnote{*}\footnotetext{Equal contribution.}

\begin{abstract}
Diffusion models degrade images through noise, and reversing this process reveals an information hierarchy across timesteps. Scale-space theory exhibits a similar hierarchy via low-pass filtering. We formalize this connection and show that highly noisy diffusion states contain no more information than small, downsampled images - raising the question of why they must be processed at full resolution. To address this, we fuse scale spaces into the diffusion process by formulating a family of diffusion models with generalized linear degradations and practical implementations. Using downsampling as the degradation yields our proposed \ours. To support \ours, we introduce \ourmodel{}, a UNet variant that performs resolution-preserving and resolution-increasing denoising using only the necessary parts of the network. We evaluate our framework on CelebA and ImageNet and analyze its scaling behavior across resolutions and network depths. 
Our \href{https://prateksha.github.io/projects/scale-space-diffusion/}{project website} is available publicly.

\end{abstract} 

\addtocontents{toc}{\protect\setcounter{tocdepth}{-1}}
\section{Introduction}
\label{sec:intro}

\begin{figure}[t]
    \centering
    \begin{subfigure}{0.53\linewidth}
        \centering
        \includegraphics[width=\linewidth]{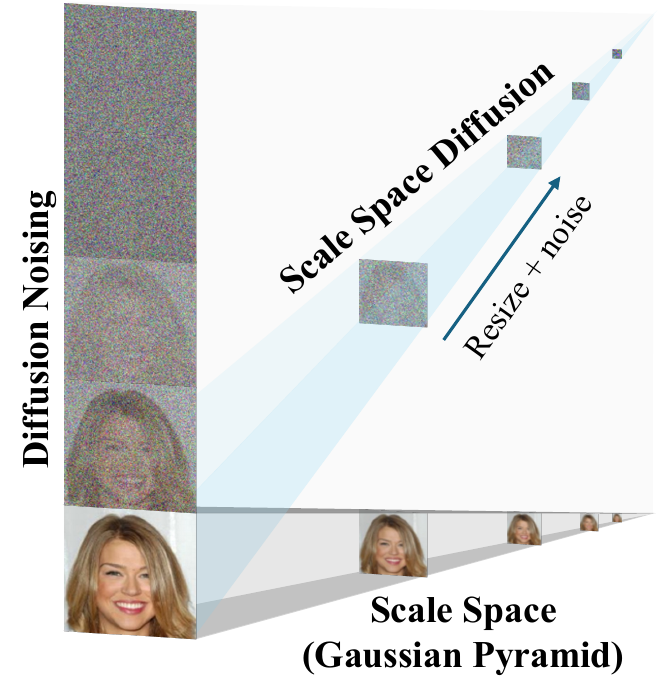}
        \caption{}
        \label{fig:left}
    \end{subfigure}
    \hfill
    \begin{subfigure}{0.46\linewidth}
        \centering
        \includegraphics[width=\linewidth]{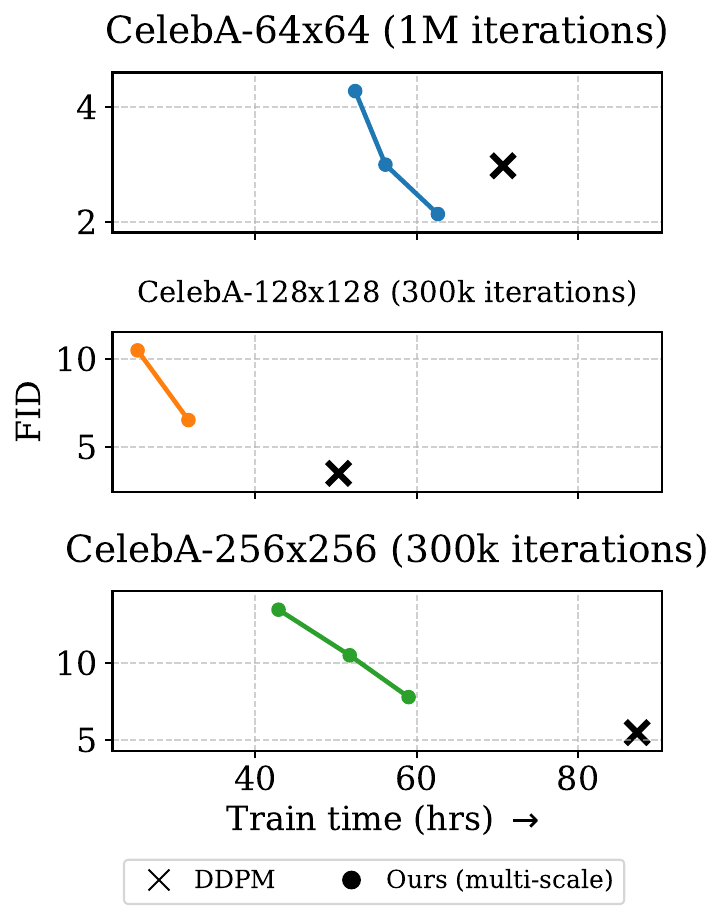}
        \caption{}
        \label{fig:right}
    \end{subfigure}
    \caption{(a) Our proposed \ours~fuses scale spaces into diffusion models. (b) We show trends in image generation performance versus time for our proposed \ourmodel{} for CelebA-64, CelebA-128, and CelebA-256. Multiple point on the same plot represent our models with different number levels (\ie, number of intermediate resolutions). We see immense gains in efficiency with resolution scaling while having reasonable performance. }\vspace{-0.2in}
    \label{fig:teaser}
\end{figure}

Diffusion models~\cite{ddpm, song2020score} are a class of generative models that achieve image synthesis by reversing an iterative noising process. It has been observed that states at different stages of the diffusion process encode different types of information~\cite{mukhopadhyay2024text}. As shown along the y-axis of Fig.~\ref{fig:teaser}(a), increasing diffusion noise progressively removes fine facial details while retaining only coarse structure. Eventually, with sufficient noising, even this structural information is lost.
This illustrates that diffusion timesteps form an intrinsic information hierarchy.

A similar property underlies scale space theory~\cite{lindeberg1994scale}, a fundamental subfield of computer vision. 
Scale spaces also represent image signals in an information-hierarchical manner through successive low-pass filtering. 
Along the x-axis of Fig.~\ref{fig:teaser}(a), we see the loss of details as the resolution decreases in a Gaussian pyramid, mirroring the information dissipation in the diffusion process. The main distinction lies in the mechanism of information degradation: diffusion uses iterative noising, whereas scale spaces use progressive blurring or downsampling.

We investigate this relationship between diffusion and scale spaces formally through a preliminary mathematical modeling of information in both processes. This reveals striking parallels in their information content, suggesting a fundamental connection between the two. \textit{Intuitively, one may ask why completely noisy images should be processed at high resolution when they contain information equivalent to that of a tiny image.} These parallels indicate that the two axes in Fig.~\ref{fig:teaser}(a) correspond to different but compatible ways of information degradation.

In this work, we revisit pixel diffusion to achieve a unification of scale spaces and the diffusion process. 
Previous attempts at this either operate only at the highest resolution~\cite{hoogeboom2022blurring, bansal2022cold}, making them computationally inefficient, or rely on simplistic covariance assumptions~\cite{abu2023udpm} that may not hold in practice, or perform noisy scale shifting using high-frequency~\cite{atzmon2024edify} or decorrelation noise~\cite{jin2024pyramidal, chen2025pixelflow, jeong2025upsample}, which remain inference-time approximations. Unlike pyramidal flow-matching approaches that approximate scale changes only during inference, our formulation integrates scale transitions directly into the diffusion process. 
In contrast, we first develop a mathematical theory for diffusion processes under generalized linear degradations, yielding a \emph{family of diffusion processes}. We further illustrate how these can be implemented in modern deep-learning frameworks. Next, using image resizing as the linear degradation, we realize a fusion of scale spaces with diffusion. We term this process \ours~(\oursshort). Denoising diffusion probabilistic models (DDPM)~\cite{ddpm} emerge as a special case of \oursshort, corresponding to the trivial case of resizing to the same size, \ie, the identity operator. These generalized degradations naturally induce non-isotropic posteriors, which we handle through an implicit sampling procedure.

To realize the general version of \ours, we require a neural network architecture capable of reversing the downsizing degradation, \ie, it must be able to upsample a noisy state. A na\"ive approach could use a UNet~\cite{unet} directly, but this would require even small-scale images to pass through the full network, leading to unnecessary computational cost. To address this, we propose a novel convolutional neural network (ConvNet) architecture that augments the standard UNet to use only the relevant levels of the network. It supports both resolution-preserving diffusion steps and next-resolution upscaling at all stages of a Gaussian pyramid. We denote this architecture as \ourmodel{}.

We analyze our framework and architectures on unconditional image generation using commonly used datasets of CelebA~\cite{liu2015faceattributes} and ImageNet~\cite{deng2009imagenet}. To study the scaling properties of our method, we conduct experiments at multiple resolutions of CelebA dataset as shown in Fig.~\ref{fig:teaser} (b). We observe that our models are faster during both training and inference while achieving reasonable FID scores. The key contributions of this work are:
\begin{enumerate}
    \item We uncover and analyze the relationship between the states of diffusion models and the levels of scale spaces.
    \item We build the mathematical foundation for a family of generalized linear diffusion processes, and techniques to implement them in modern deep-learning frameworks. With resizing as the choice for the linear degradation, we realize the fusion of diffusion and scale spaces, which we term \ours{}.  
    \item To enable \ours{}, we introduce a novel architecture \ourmodel{} capable of handling both resolution-changing as well as resolution-preserving reverse diffusion across multiple resolutions. 
\end{enumerate}

\addtocontents{toc}{\protect\setcounter{tocdepth}{2}} %

\addtocontents{toc}{\protect\setcounter{tocdepth}{-1}}

\section{Related Work}
\label{sec:related_work}

\textbf{Diffusion Models.} Diffusion Models have become the de facto standard for image generation in recent times. Early works such as DDPM~\cite{ddpm} achieved high-quality image generation  without adversarial training, but relied on simulating a Markov chain with a large number of steps for sampling. DDIM~\cite{song2020ddim} accelerated the sampling process, while methods such as LDM~\cite{rombach2022ldm} performed denoising in a compact latent space rather than directly in the pixel-space. Recently, DiTs have become popular, replacing traditional UNet based backbones with transformer architecture~\cite{peebles2023dit}. Motivated by the goal of scaling diffusion models for high-resolution image generation while maintaining architectural simplicity and high-frequency image details, we propose an end-to-end \ours{} model that performs denoising directly in the pixel domain.

\noindent\textbf{Scale-Space Theory.}  Scale-space theory~\cite{lindeberg1994scale} is a fundamental concept in computer vision, that provides a framework for multi-scale image representation and analysis. It has been widely used in visual understanding tasks~\cite{lowe1999object, canny2009computational}. The underlying idea of representation at multiple scales has been smartly used in the context of generative models to progressively generate images at increasing resolutions. In GAN-based approaches, Progressive GAN ~\cite{karras2017progressivegan} has shown excellent results in generating high-resolution images by learning to generate at increasing resolutions during the training process. In some other works such as LAPGAN~\cite{denton2015lapgan}, multiple GANs, one for each scale, are used to upscale the image by producing a
residual, similar to a Laplacian pyramid. 

Several works in the space of diffusion models have also drawn inspiration from scale-space theory. Cascaded diffusion model~\cite{ho2021cascadeddm} consists of a series of diffusion models that generate images of increasing resolutions, where the base model produces a low-resolution image and subsequent super-resolution models refine it using the upsampled version of the low-resolution image as a condition. Matryoshka Diffusion~\cite{gu2023matryoshka} model proposes a diffusion process that denoises inputs at multiple resolutions jointly.

However, none of these approaches directly incorporate scale-space theory in the diffusion process because the noise component of the noisy intermediate state leads to correlated noise pixels at an upsampled state. Some works solve this by adding additional noise at the higher resolution. Relay Diffusion~\cite{teng2023relay} imagines a low-resolution generation as a high-resolution image with block noise and trains a model to denoise it at higher resolution with a weighted combination of block noise and high-resolution noise. Laplacian Diffusion Models~\cite{atzmon2024edify} train separate models for different resolutions and add a Laplacian residual of high-resolution noise during resolution transitions. However, simply adding high-resolution noise does not fully resolve the distribution mismatch between noisy states at different resolutions. Pyramidal Flow Matching~\cite{jin2024pyramidal} addresses this issue by adding decorrelated noise while also rolling the diffusion process back to a noisier timestep. PixelFlow~\cite{chen2025pixelflow} and Region Adaptive Latent Sampling~\cite{jeong2025upsample} build on this idea. Bottleneck-Sampling~\cite{tian2025bottleneck} as opposed to increasing scales introduces a bottleneck scale for better generation, while Decomposed Flow Matching~\cite{haji2025decomposable} predicts Laplacian residuals of clean images. 
UDPM~\cite{abu2023udpm} tries to add blurring and subsampling into the diffusion process, assumes isotropic posterior covariance to simplify their reverse diffusion derivation, which may not hold, given that the blurring kernels usually overlap in most implementations of resizing. We show through \ours{} that end-to-end training of a single diffusion model capable of handling multiple resolutions, with a generalized mathematical formulation for resolution transitions, helps to achieve faster generation, while preserving high-quality.

\addtocontents{toc}{\protect\setcounter{tocdepth}{2}} %

\addtocontents{toc}{\protect\setcounter{tocdepth}{-1}}
\section{Scale Spaces vis-\`a-vis Diffusion Timesteps}
\label{sec:analysis}
In this section, we outline the motivation behind our approach, which originates from a simple but compelling intuition.
Consider the intermediate states of a diffusion model (Fig.~\ref{fig:info}(a), bottom) and the scales of a Gaussian pyramid (Fig.~\ref{fig:info}(b), bottom). 
If one squints and focuses on the third image from the left along the $t$-axis, the overall structure of the face begins to emerge, which is remarkably similar to the information present in the images corresponding to smaller spatial scales along the $r$-axis of the Gaussian pyramid. As we move rightward along either axis (\ie, decreasing $t$ or increasing $r$), it becomes evident how finer details are added progressively. 

This observation suggests a striking correspondence in the information hierarchy between diffusion timesteps and scale-space resolutions (or scales). Our goal is to quantify this correspondence. To do so, we first review the standard diffusion process, and then formalize our intuition by mathematically characterizing the amount of information present across diffusion states.

\subsection{Preliminary: Standard Diffusion Process}
In standard denoising diffusion probabilistic models (DDPM)~\cite{ddpm}, the forward diffusion process is modeled as a Markov chain that progressively noises a signal by adding Gaussian noise. For $x_0 \sim q(x_0)$, where $q(x_0)$ is the data distribution, the process is defined as:
\begin{equation}
\label{eq:forward_step_ddpm}
\begin{split}
x_t = \sqrt{\alpha_t}x_{t-1} +\sqrt{1-\alpha_t}\epsilon, \quad \epsilon \sim \mathcal{N}(0, \textbf{I})
\end{split}
\end{equation}
where  ${\{\beta_t\}}_{t=1}^T$ is the variance schedule (with $\alpha_t:=1-\beta_t$).
This expression, when applied iteratively over $t$, leads to an alternative definition that expresses the noisy state as a linear combination of the signal $x_0$ and the noise $\epsilon$:
\begin{equation}
\label{eq:make_noisy_ssd_analysis}
\begin{split}
x_t = \sqrt{\Bar{\alpha}_t}x_0 +\sqrt{1-\Bar{\alpha}_t}\epsilon, \quad \epsilon \sim \mathcal{N}(0, \textbf{I})
\end{split}
\end{equation}
where $\Bar{\alpha}_t:=\prod_{i=0}^t\alpha_i$. Diffusion models aim to reverse this process by approximating the posterior distribution $q(x_{t-1}|x_0,x_t)$ using a neural network (with parameters $\theta$) that predicts the noise $\epsilon$ in Eq.~\ref{eq:make_noisy_ssd_analysis}. This model, $\epsilon_\theta(x_t, t)$, is trained using a simplified loss function $\mathcal{L}_{\text{simple}}=\mathbb{E}_{x_0,t,\epsilon}\left[\|\epsilon_\theta(x_t, t) - \epsilon \|_2^2\right]$. The model can also be parameterized to predict $x_0$ instead of $\epsilon$.

\subsection{Information Degradation in Diffusion and Scale Spaces}

\begin{figure}[t]
    \centering
    \includegraphics[width=0.8\linewidth]{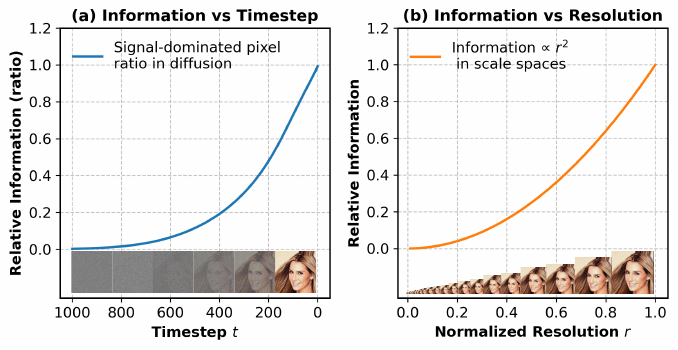}
    \vspace{-0.2in}
    \caption{\textbf{Information Analysis.} (a) Amount of information present in a diffusion state as diffusion step $t$ changes. (b) Amount of information present in images at various resolutions (scales).}
    \vspace{-0.1in}
    \label{fig:info}
\end{figure}

\noindent \textbf{Diffusion States.} In this section, we formally model the information degradation over the diffusion process. Eq.~\ref{eq:make_noisy_ssd_analysis} has two terms -- a signal term and a noise term. One way to model the amount of information present in $x_t$ is to compute the percentage of pixels for which the noise term dominates the signal term,
\ie, $|\sqrt{1-\Bar{\alpha}_t}\,\epsilon| > |\sqrt{\Bar{\alpha}_t}\,x_0|$.
In other words, we are looking for the probability that $|\epsilon|$ is greater than $s(t)\,|x_0|$, where $s(t)=\frac{\sqrt{\Bar{\alpha}_t}}{\sqrt{1-\Bar{\alpha}_t}}$ is the square root of the signal-to-noise coefficient ratio.
We have
$P\left(|\epsilon| > s(t)\,|x_0|\right) = \left(1-\Phi\!\left(s(t)\,|x_0|\right)\right) + \Phi\!\left(-s(t)\,|x_0|\right) = 2\,\Phi\!\left(-s(t)\,|x_0|\right)$,
where $\Phi$ is the CDF of the standard normal distribution, and the second equality follows from symmetry of the standard normal distribution. Hence, $P\left(|\epsilon| \leq s(t)\,|x_0|\right) = 1 - 2\,\Phi\!\left(-s(t)\,|x_0|\right)$. Now to obtain the expected fraction of signal-dominated pixels, \emph{a proxy for information}, we average this probability over the data distribution $q(x_0)$.
For simplicity, let us assume $x_0 \sim \mathcal{U}(-1,1)$. Then the variation of information over timestep $t$ can be written as: 
\begin{equation*}
\begin{split}
  \text{Info}(t)&=\mathbf{E}_{x_0\sim\mathcal{U}(-1,1)}[1 - 2\Phi( - s(t) |x_0|)]\\
  &=1- 2\int_{-1}^1p_{\mathcal{U}(-1,1)}(x)\Phi(-s(t)|x|)dx\\
  &=1- 2\int_{-1}^1\frac{1}{2}\Phi(-s(t)|x|)dx\\
  &=1- \int_{-1}^1\Phi(-s(t)|x|)dx=1 - 2\int_{0}^1\Phi(-s(t)x)dx,
  \end{split}
\end{equation*}
where we use the fact that the uniform distribution has density $p_{\mathcal{U}} = \frac{1}{2}$ over $[-1,1]$, and for the final equality we split the integral about $x=0$.
Using this simplification, $\text{Info}(t)$ can be numerically computed as a function of $t$, as shown in Fig.~\ref{fig:info}(a).

\noindent \textbf{Scale Spaces.} Similar to the approximation of information across diffusion steps, here we want to approximate the information as a function of image resolution (\ie, scale).
A simple way to model this is to assume:
$$\text{Info} \propto \text{Area}.$$
Let us consider a normalized resolution $r \in [0,1]$, where $0$ represents no pixels and $1$ represents the highest resolution. Under this assumption, the information can be written as:
$$\text{Info}(r) = r^2.$$
This implies, for example, that if the spatial dimensions of an image are halved, then the information becomes one quarter, which may not be strictly true due to redundancy in pixel space. However, the monotonic trend should still hold. This trend is visualized in Fig.~\ref{fig:info}(b).

Notice how there is a similarity in the trends of information degradation as $t$ increases versus as $r$ decreases. This analysis quantifies our main intuition regarding the similarity in the information trends across diffusion steps and scale spaces. Given this insight, we aim to leverage this intuition to construct a framework that realizes scale spaces within the current formulation of diffusion models.

In our initial attempts to incorporate scale spaces into diffusion models, we tried to frame this problem as jumping across the same timesteps of independent diffusion processes at varying scales. However, this led to an accumulation of errors during the iterative inference procedure, resulting in suboptimal outputs. Methods such as Pyramidal Flow Matching~\cite{jin2024pyramidal, chen2025pixelflow, jeong2025upsample} address this issue by adding decorrelation noise when transitioning across scales and then backtracking in time so that an appropriate noise level is selected. This strategy helps mitigate the error accumulation. Nonetheless, it does not actually resolve the underlying issue -- the diffusion process itself is not mathematically modeled to handle scale changes. In this work, we aim to fill this gap.

\addtocontents{toc}{\protect\setcounter{tocdepth}{2}} %

\addtocontents{toc}{\protect\setcounter{tocdepth}{-1}}

\definecolor{cone}{RGB}{220,0,0}
\definecolor{ctwo}{RGB}{0,0,200}
\definecolor{cthree}{RGB}{180,0,180}
\definecolor{cfour}{RGB}{34,139,34}
\definecolor{cfive}{RGB}{255,140,0}    %
\definecolor{csix}{RGB}{0,160,160}     %

\newcommand{\swatch}[1]{%
  \begingroup
  \setlength{\fboxsep}{0pt}%
  \colorbox{#1}{\rule{0pt}{0.6em}\hspace{0.6em}}%
  \endgroup
}

\definecolor{cone}{RGB}{220,0,0}
\definecolor{ctwo}{RGB}{0,0,200}
\definecolor{cthree}{RGB}{180,0,180}
\definecolor{cfour}{RGB}{34,139,34}
\definecolor{cfive}{RGB}{255,140,0}
\definecolor{csix}{RGB}{0,160,160}

\newcommand{\mhl}[3][18]{%
  \begingroup
  \setlength{\fboxsep}{1.0pt}%
  \colorbox{#2!#1}{$\displaystyle #3$}%
  \endgroup
}

\begin{table*}[t]
\centering
\small
\renewcommand{\tabcolsep}{4pt}
\renewcommand{\arraystretch}{1.6}

\caption{Comparison between the formulations of the forward, marginal, and posterior distributions of DDPM and Blurring Diffusion (BD) against our \ours. For Blurring Diffusion, we use `$\text{a}$' instead of $\alpha$ used in their paper, to not confuse it with the $\alpha$ in DDPM. 
Note that BD applies a change of variable $u_t=V^Tx_t$, where $V^T$ is the Discrete Cosine Transform, before performing diffusion, \ie, diffusion in frequency space.
BD and DDPM have equivalent formulations when $\text{a}_t=\sqrt{\Bar{\alpha}_t}$ and $\sigma_t=\sqrt{1-\Bar{\alpha}_t}$. 
While the formulations share structural similarities, \ours{} extends the framework to support general linear degradations (\eg, downscaling), which are not handled by DDPM or BD.
We highlight analogous terms with consistent background colors for easier correspondences across different formulations.
}
\vspace{-0.28in}
\rightline{%
\footnotesize
\textbf{Legend:}\;
Forward \swatch{cthree!30} mean, \swatch{cfour!30} var\;
\quad
Marginal \swatch{cone!30} mean, \swatch{ctwo!30} var\;
\quad
Posterior \swatch{cfive!30} mean, \swatch{csix!30} var
}%

\label{tab:linear_operator_diffusion}

\begin{adjustbox}{max width=\linewidth}
\vspace{0.4em}
\begin{tabular}{@{}l|c|c|c@{}}
\toprule
\textbf{Distributions} & \textbf{DDPM~\cite{ddpm}} & \textbf{Blurring Diffusion~\cite{hoogeboom2022blurring}} & \textbf{\ours} \\ 
\midrule
\midrule

Forward  $q(x_t|x_{t-1})$ 
& 
\(
\begin{aligned}
\small
x_t &= \mhl{cthree}{\sqrt{\alpha_t}}\,x_{t-1}
     + \mhl{cfour}{\sqrt{1-\alpha_t}}\,\epsilon,
\ \epsilon \sim \mathcal{N}(0, \mathbf{I}) \\
& \\
& \\
\end{aligned}
\)
& 
\(
\begin{aligned}
\small
u_t &= \mhl{cthree}{\text{a}_{t|t-1}}\,u_{t-1}
     + \mhl{cfour}{\sigma_{t|t-1}}\,\epsilon,
\ \epsilon \sim \mathcal{N}(0, \mathbf{I}) \\
\text{where }\ \mhl{cthree}{\text{a}_{t|t-1}}
&= \frac{\mhl{cone}{\text{a}_t}}{\mhl{cone}{\text{a}_{t-1}}}, \\
\mhl{cfour}{\sigma_{t|t-1}}
&= \mhl{ctwo}{\sigma_t^2} - \mhl{cthree}{\text{a}_{t|t-1}^2}\,\mhl{ctwo}{\sigma_{t-1}^2}
\end{aligned}
\)
& 
\(
\begin{aligned}
\small
x_t &= \mhl{cthree}{\mu_{t|t-1}}
     + \mhl{cfour}{\eta_{t|t-1}},\ 
\mhl{cfour}{\eta_{t|t-1}} \sim \mathcal{N}(0, \mhl{cfour}{\Sigma_{t|t-1}}) \\
\text{where }\ \mhl{cthree}{\mu_{t|t-1}}
&= \mhl{cthree}{M_t} x_{t-1}
= \mhl{cone}{M_{1:t}}(\mhl{cone}{M_{1:t-1}})^{-1}x_{t-1},\\
\mhl{cfour}{\Sigma_{t|t-1}}
&= \mhl{ctwo}{\Sigma_t} - \mhl{cthree}{M_t}\,\mhl{cfour}{\Sigma_{t-1}}\,\mhl{cthree}{M_t^T}
\end{aligned}
\)
\\

\midrule

Marginal  $q(x_t|x_0)$ 
& 
\(
\begin{aligned}
\small
x_t &= \mhl{cone}{\sqrt{\Bar{\alpha}_t}}\,x_0
     + \mhl{ctwo}{\sqrt{1-\Bar{\alpha}_t}}\,\epsilon,
\ \epsilon \sim \mathcal{N}(0, \mathbf{I})\\
& \\
\end{aligned}
\)
& 
\(
\begin{aligned}
\small
u_t &= \mhl{cone}{\text{a}_t}\,u_0
     + \mhl{ctwo}{\sigma_t}\,\epsilon,
\ \epsilon \sim \mathcal{N}(0, \mathbf{I})\\
& \\
\end{aligned}
\)
& 
\(
\begin{aligned}
\small
x_t &= \mhl{cone}{\mu_t}
     + \mhl{ctwo}{\eta_t},\ 
\mhl{ctwo}{\eta_t} \sim \mathcal{N}(0, \mhl{ctwo}{\Sigma_t})\\
\text{where }\ \mhl{cone}{\mu_t}
&= \mhl{cone}{M_{1:t}}x_0,\qquad
\mhl{ctwo}{\Sigma_t} = \mhl{ctwo}{\sigma_t^2}\mathbf{I}
\end{aligned}
\)
\\

\midrule

Posterior  $q(x_{t-1}|x_t, x_0)$ 
&
\(
\begin{aligned}
\small
x_{t-1} &= \mhl{cfive}{\Tilde{\mu}_{t-1}}
        + \mhl{csix}{\Tilde{\beta}_{t-1}}\,\epsilon,\ 
\epsilon \sim \mathcal{N}(0, \mathbf{I}) \\
\text{where }\ \mhl{csix}{\Tilde{\beta}_{t-1}}
&= \tfrac{1-\Bar{\alpha}_{t-1}}{1-\Bar{\alpha}_t}\beta_t \\
&= \Big(\tfrac{1}{\mhl{ctwo}{1-\Bar{\alpha}_{t-1}}} + \tfrac{\mhl{cthree}{\alpha_t}}{\mhl{cfour}{1-\alpha_t}}\Big)^{-1}, \\
\mhl{cfive}{\Tilde{\mu}_{t-1}}
&=\tfrac{\sqrt{\alpha_t}(1-\Bar{\alpha}_{t-1})}{1-\Bar{\alpha}_t}x_t
+ \tfrac{\Bar{\alpha}_{t-1}\beta_t}{1-\Bar{\alpha}_t}x_0 \\
&= \mhl{csix}{\Tilde{\beta}_{t-1}}
\Big(\tfrac{\mhl{cthree}{\sqrt{\alpha_t}}}{\mhl{cfour}{1-\alpha_t}}x_t
+\tfrac{\mhl{cone}{\Bar{\alpha}_{t-1}}}{\mhl{ctwo}{1-\Bar{\alpha}_{t-1}}}x_0\Big)
\end{aligned}
\)
& 
\(
\begin{aligned}
\small
u_{t-1} &= \mhl{cfive}{\mu_{t\rightarrow t-1}}
        + \mhl{csix}{\sigma_{t \rightarrow t-1}}\,\epsilon,\ 
\epsilon \sim \mathcal{N}(0, \mathbf{I}) \\
\text{where }\ \mhl{csix}{\sigma_{t \rightarrow t-1}^2}
&= \left(\tfrac{1}{\mhl{ctwo}{\sigma_{t-1}^2}} + \tfrac{\mhl{cthree}{\text{a}_{t|t-1}^2}}{\mhl{cfour}{\sigma_{t|t-1}^2}}\right)^{-1}, \\
\mhl{cfive}{\mu_{t \rightarrow t-1}}
&= \mhl{csix}{\sigma_{t \rightarrow t-1}^2}
\left(\tfrac{\mhl{cthree}{\text{a}_{t|t-1}}}{\mhl{cfour}{\sigma_{t|t-1}^2}}x_t
+\tfrac{\mhl{cone}{\text{a}_{t-1}}}{\mhl{ctwo}{\sigma_{t-1}^2}}x_0\right)\\
&
\end{aligned}
\)
& 
\(
\begin{aligned}
\small
x_{t-1} &= \mhl{cfive}{\mu_{t \rightarrow t-1}}
        + \mhl{csix}{\eta_{t \rightarrow t-1}},\ 
\mhl{csix}{\eta_{t \rightarrow t-1}} \sim \mathcal{N}(0, \mhl{csix}{\Sigma_{t \rightarrow t-1}}) \\
\text{where }\ \mhl{csix}{\Sigma_{t\rightarrow t-1}}
&= (\mhl{ctwo}{\Sigma_{t-1}^{-1}} + \mhl{cthree}{M_t^T}\,\mhl{cfour}{\Sigma_{t|t-1}^{-1}}\,\mhl{cthree}{M_t})^{-1} \\
&= \sigma_{t-1}^2\mathbf{I}-\tfrac{\sigma_{t-1}^4}{\sigma_t^2}M_t^TM_t,\\
\mhl{cfive}{\mu_{t\rightarrow t-1}}
&= \mhl{csix}{\Sigma_{t\rightarrow t-1}}
(\mhl{cthree}{M_t^T}\,\mhl{cfour}{\Sigma_{t|t-1}^{-1}}x_t
+\mhl{ctwo}{\Sigma_{t-1}^{-1}}\,\mhl{cone}{\mu_{t-1}}) \\
&= \mu_{t-1} + \tfrac{\sigma_{t-1}^2}{\sigma_t^2}M_t^T(x_t-M_t\mu_{t-1})
\end{aligned}
\)
\\

\bottomrule
\end{tabular}
\end{adjustbox}
\vspace{-0.2in}
\end{table*}

\section{\ours{} (\oursshort)}
\label{sec:method}
In this section, we introduce a new family of diffusion processes that use a generalized linear degradation operation for degrading the signal, in addition to the standard additive Gaussian noise. We then show how this formulation can be implemented in deep learning frameworks such as PyTorch~\cite{paszke2019pytorch} for any choice of a linear degradation that is available as a function call. In our case, we choose a downsizing operator as our linear degradation. Next, we present our training and sampling pipelines. Finally, we introduce our architecture that can handle scale-preservation and scale-changing transitions at multiple resolutions.

\subsection{Generalized Linear Diffusion Process}
\subsubsection{Extension to Linear Degradation}
We now replace the scalar coefficient of $x_{t-1}$ in Eq.~\ref{eq:forward_step_ddpm}, \ie, $\sqrt{\alpha_t}$,  with a more generic linear operator $M_t$. For example, blurring or downsampling  can serve as such a linear operator. Let us assume a Gaussian distribution for this updated formulation for the transition distribution $q(x_t|x_{t-1})$ as 
$x_t = M_t x_{t-1}+\eta_t, \eta_t \sim \mathcal{N}(0, \Sigma_{t|t-1})$. Here, we do not assume $\Sigma_{t|t-1}$ to be isotropic. 

Now, repeatedly sampling the next state using the transition distribution, we want to derive an equation analogous to Eq.~\ref{eq:make_noisy_ssd_analysis}, which provides us $x_t$ given $x_0$.
It is clear that this will also be a Gaussian distribution $q(x_t|x_0)= \mathcal{N}(\mu_t,\Sigma_t)$. The only constraint we want to enforce is isotropy, \ie, $\Sigma_t=\sigma_t^2\textbf{I}$. 
For the coefficient of $x_0$, 
 instead of $\sqrt{\Bar{\alpha}_t}=\sqrt{\alpha_t}\sqrt{\alpha_{t-1}}\dots\sqrt{\alpha_1}$ in Eq.~\ref{eq:make_noisy_ssd_analysis}, we get $M_{1:t}=M_t M_{t-1} \dots M_1$, \ie, $\mu_t=M_{1:t}x_0$.
 Hence, $q(x_t|x_0)=\mathcal{N}(M_{1:t}x_0,\sigma_t^2 \textbf{I})$, which can be expressed as:
\begin{equation}
\label{eq:make_noisy_ssd_matrix}
x_t = M_{1:t}x_0 +\sigma_t\epsilon, \quad \epsilon \sim \mathcal{N}(0, \textbf{I})
\end{equation}
Using Theorem~\ref{thm:forward_covariance_linear_diffusion}, similar to blurring diffusion~\cite{hoogeboom2022blurring}, the transition distribution $q(x_t|x_{t-1})$ is given by:
\begin{equation}
\label{eq:forward_step_ddpm_matrix}
\begin{split}
x_t &= M_{t}x_{t-1} + \eta_t, \quad \eta_t \sim \mathcal{N}(0, \Sigma_{t|t-1}), \\
\text{where } \Sigma_{t|t-1}&=\Sigma_t - M_t\Sigma_{t-1}M_t^T.
\end{split}
\end{equation}

For the isotropic marginals $\Sigma_t=\sigma_t^2\textbf{I}$ and $\Sigma_{t-1}=\sigma_{t-1}^2\textbf{I}$, we obtain 
$\Sigma_{t|t-1}= \sigma_t^2\textbf{I} - \sigma_{t-1}^2M_tM_t^T$.
For positive semi-definite feasibility we require $\sigma_t^2\textbf{I}\succeq\sigma_{t-1}^2M_tM_t^T$, \ie, $\sigma_t^2\ge\sigma_{t-1}^2\lambda_\text{max}(M_tM_t^T)$.

As shown in Theorem~\ref{thm:posterior_linear_diffusion}, the reverse diffusion step, \ie, the posterior distribution $q(x_{t-1}|x_t, x_0)$, conditioned additionally on $x_0$, is also a normal distribution:
\begin{equation}
\label{eq:reverse_step_ddpm_matrix}
\begin{split}
q(x_{t-1}|x_t, x_0) &= \mathcal{N}(\mu_{t\rightarrow t-1}, \Sigma_{\mu_{t\rightarrow t-1}}),\\
\text{where } \Sigma_{t\rightarrow t-1}&=(\Sigma_{t-1}^{-1}+M_t^T\Sigma_{t|t-1}^{-1}M_t)^{-1}, \text{ and} \\
\mu_{t\rightarrow t-1}&=\Sigma_{t\rightarrow t-1}(\Sigma_{t-1}^{-1}\mu_{t-1}+M_t^T\Sigma_{t|t-1}^{-1}x_t)
\end{split}
\end{equation}

Using the Woodbury matrix identity and isotropic covariance assumption, this simplifies to (Theorem~\ref{thm:isotropic_posterior_simplification}):
\begin{equation}
\label{eq:reverse_step_ddpm_matrix_sigma_simplified_isotropic}
\begin{split}
\Sigma_{t\rightarrow t-1} &= \sigma_{t-1}^2\textbf{I}-\frac{\sigma_{t-1}^4}{\sigma_t^2}M_t^TM_t\\
\mu_{t\rightarrow t-1} &= \mu_{t-1} + \frac{\sigma_{t-1}^2}{\sigma_t^2}M_t^T(x_t-M_t\mu_{t-1})
\end{split}
\end{equation}
Please refer to Table~\ref{tab:linear_operator_diffusion} for the comparison of our Generalized Linear Diffusion Process framework against DDPM and Blurring Diffusion (BD).

\smallskip
\noindent \textbf{DDPM as a special case of \oursshort{}.} When $M_t=\sqrt{\Bar{\alpha}_t}\mathbf{I}$ and $\sigma_t=\sqrt{1-\Bar{\alpha}_t}$, the forward, marginal, and posterior distributions of \oursshort{} collapse to those of the DDPM model. 

\subsubsection{Implementation Details}
\noindent \textbf{Choice of $\mathbf{M_t}$.} We derived the above framework so that we can introduce scale spaces from Gaussian pyramids into the diffusion process. Although $M_t$ may be any arbitrary linear operator, for our purposes we select it to be a resize operator, which effectively blurs and downsamples the image, and then multiplies it by $\text{a}_t=\sqrt{\Bar{\alpha_t}}$, as shown in Algo.~\ref{alg:implicit_linear_operator}. Note that this changes the dimensionality of the signal, in contrast with previous diffusion formulations. However, since we make no assumptions about dimensionality, our framework remains valid regardless. Furthermore, with this choice of $M_t$, we also define a resolution schedule $r(t)$ that maps diffusion timestep ($t$) to the corresponding resolution, such that the resolution monotonically decreases as $t$ increases (Fig.~\ref{fig:r-t_schedule}). Refer to the supplementary Sec.~\ref{sec:ssd_validity} for another degradation example.

\noindent \textbf{Calculating the Transpose.} Since operators like image resizing are implicit, we may not have the matrix form available, making it non-trivial to apply the transpose $M_t^T$. To address this, we use a vector-Jacobian product of the function call $M_t(\cdot)$, \ie, $M^Tv=$ \code{torch.autograd.grad(}$M_t$\code{(x), x, grad\_outputs=}$v$\code{)[0]}, which, for linear operators, does not depend on \code{x}, as shown in Algo.~\ref{alg:implicit_linear_operator_transpose}. This computes the derivative of the inner product $\langle v,M_tx \rangle$ with respect to $x$, \ie, $\nabla_{x}\langle v,M_tx \rangle=M_t^Tv$.

\noindent \textbf{Sampling from a Non-Isotropic Gaussian Distribution.} A neat trick to sample from a non-isotropic Gaussian distribution with covariance matrix $\Sigma$ is to first sample a standard Gaussian noise $\epsilon \sim \mathcal{N}(0, \textbf{I})$, and then multiply with the square root of the covariance matrix, so that $\Sigma^{\frac{1}{2}}\epsilon \sim \mathcal{N}(0, \Sigma^{\frac{1}{2}} \textbf{I} (\Sigma^{\frac{1}{2}})^T)=\mathcal{N}(0, \Sigma)$. In our case, we need to sample noise from $\Sigma_{t \rightarrow t-1}$ from Eq.~\ref{eq:reverse_step_ddpm_matrix_sigma_simplified_isotropic}, which depends on implicit operators $M_t(\cdot)$ and $M_t^T(\cdot)$. Thus, we need a way to apply $\Sigma_{t \rightarrow t-1}(\cdot)$ implicitly to a standard Gaussian noise $\epsilon$. For this purpose, we use the Lanczos algorithm~\cite{lanczos1950iteration, golub2009matrices}, which numerically computes $A(x)$ given an implicit symmetric linear operator $A(\cdot)$ and vector $x$. When the Lanczos algorithm is applied with a square root spectral function over the eigenvalues, we can obtain $A^{\frac{1}{2}}x$. In our case, this gives
$\eta_{t \rightarrow t-1}=\Sigma_{t \rightarrow t-1}^\frac{1}{2}\epsilon \sim \mathcal{N}(0, \Sigma_{t \rightarrow t-1})$ as shown in Algo.~\ref{alg:sample_non_isotropic_noise}.

\definecolor{codeblue}{rgb}{0.25,0.5,0.5}
\definecolor{codekw}{rgb}{0.85, 0.18, 0.50}

\definecolor{codesign}{RGB}{0, 0, 255}
\definecolor{codefunc}{rgb}{0.85, 0.18, 0.50}
\definecolor{codeblack}{RGB}{0,0,0}

\definecolor{codecall}{rgb}{0.5, 0.2, 0.8}

\lstdefinelanguage{PythonFuncColor}{
  language=Python,
  keywordstyle=\color{blue}\bfseries,
  commentstyle=\color{codeblue},  %
  stringstyle=\color{orange},
  showstringspaces=false,
  basicstyle=\ttfamily\small,
  literate=
    {*}{{\color{codesign}* }}{1}
    {-}{{\color{codesign}- }}{1}
    {+}{{\color{codesign}+ }}{1}
    {\*\*}{{\color{codesign}** }}{1}
    {-->}{{\color{codeblue}->}}{1}
    {-1}{{\color{codeblue}-1}}{1}
    {torch.autograd.grad}{{\color{codecall}torch.autograd.grad}}{1}
    {torch.zeros}{{\color{codecall}torch.zeros}}{1}
    {F.interpolate}{{\color{codecall}F.interpolate}}{1}
    {lanczos}{{\color{codecall}lanczos}}{1}
    {M\_input\_shape}{{\color{codeblack}M\_input\_shape}}{1}
    {shape}{{\color{codecall}shape}}{1}
}

\lstset{
  language=PythonFuncColor,
  backgroundcolor=\color{white},
  basicstyle=\fontsize{6.5pt}{7pt}\ttfamily\selectfont,
  columns=fullflexible,
  breaklines=true,
  captionpos=b,
}

\lstset{
  emph={M,M_T,sample_non_isotropic_noise, train_iter, sample_iter, diffuse, model, backward, step, zero_grad, calculate_posterior_noise, calculate_posterior_mean, cummulative_M, min_snr_5},
  emphstyle={\color{codefunc}},
}
\lstset{
  literate=
    {Eq_diffuse_matrix}{{Using Eq.~\ref{eq:make_noisy_ssd_matrix}}}1
    {Eq_denoise_matrix}{{Using Eq.~\ref{eq:reverse_step_ddpm_matrix_sigma_simplified_isotropic}}}1
    {Eq_loss_ssd}{{Using Eq.~\ref{eq:loss_ssd}}}1
}
\begin{figure}[t]
\centering
\begin{minipage}{\linewidth}
\begin{algorithm}[H]
\caption{Implicit Linear Operator}
\label{alg:implicit_linear_operator} 
\begin{lstlisting}
# M resizes and attenuates signal x
def M(x, a_t, a_t_minus1, size_out):
    return (a_t / a_t_minus1) * F.interpolate(
        x, size=size_out, mode="bilinear",
        align_corners=False, antialias=True)

\end{lstlisting}
\end{algorithm}
\end{minipage}\hfill
\begin{minipage}{\linewidth}
\begin{algorithm}[H]
\caption{Implicit Linear Operator's Transpose}
\label{alg:implicit_linear_operator_transpose}
\begin{lstlisting}
# M_T applies the transpose of M on v
def M_T(M, v, a_t, a_t_minus1, M_input_shape):
    size_out = v.shape()[-2:]
    with torch.enable_grad():  
        x = torch.zeros(M_input_shape, requires_grad=True)
        out = M(x, a_t, a_t_minus1, size_out)
        # calculate M^Tv = d<v,Mx>/dx
        (g,) = torch.autograd.grad(out, x, grad_outputs=v, retain_graph=False)
    return g
\end{lstlisting}
\end{algorithm}
\end{minipage}\hfill
\begin{minipage}{\linewidth}
\begin{algorithm}[H]
\caption{Sampling Non-Isotropic Gaussian Noise}
\label{alg:sample_non_isotropic_noise}
\begin{lstlisting}
# Sample noise from posterior covariance Sigma_{t-->t-1}
def sample_non_isotropic_noise(M, M_T, sigma_t, sigma_tminus1, x):
    rho = (sigma_tminus1 ** 2) / (sigma_t ** 2)
    # Define matvec operator A
    A = lambda v: v - rho * M_T(M(v))
    # Lanczos approximation of A^{1/2}v
    y = lanczos(A, x, f=lambda l: l.sqrt())
    return sigma_tminus1 * y
\end{lstlisting}
\end{algorithm}
\end{minipage}

\vspace{-0.2in}
\end{figure}

\subsection{Training and Sampling}

To reverse the diffusion process using Eq.~\ref{eq:reverse_step_ddpm_matrix_sigma_simplified_isotropic}, our model must predict $\mu_{t-1}=M_{1:t-1}x_0$, which, with our choice of $M_t$, reduces to a scaled version of an image $x_0$ at resolution $r(t-1)$.  
To train such a model, using our Generalized Linear Diffusion Process, we need to first adapt $\mathcal{L}_\text{simple}$. When predicting $x_0$, the loss becomes $\mathcal{L}_\text{simple}=\mathbb{E}_{x_0,t,\epsilon}[s^2(t)\|x_0^\theta(x_t, t) - x_0 \|_2^2]$, where $s^2(t)$ is the signal to noise ratio, as shown in \cite{salimans2022progressive}. In Min-SNR-$\gamma$~\cite{hang2023efficient} instead of the $s^2(t)$ weighting, they use $\min(s^2(t),\gamma)$, with $\gamma=5$, which improves the performance of $x_0$ parameterization significantly. Following this, our loss function evaluates to: 
\begin{equation}
\label{eq:loss_ssd}
\begin{split}
\mathcal{L}=\mathbb{E}_{x_0,t,\epsilon}\Big[\text{min}(s^2(t), \gamma)\Big\|x_{0,\theta}^{r(t-1)}(x_t, t) - \frac{1}{\text{a}_{t-1}}M_{1:t-1}x_0 \Big\|_2^2\Big]
\end{split}
\end{equation}
where we predict an unscaled $\mu_{t-1}$ using a neural network $x_{0,\theta}^{r(t-1)}$ (Algo.~\ref{alg:training}). Note that the input resolution $r(t)$ of $x_t$ may be smaller than the resolution of the output at $r(t-1)$ as seen in Fig.~\ref{fig:method} (left). In standard diffusion training, timesteps are simply sampled uniformly for each batch. However, this is non-trivial in our setting because the $(r(t),r(t-1))$ pairs may not match.
To solve this, we first uniformly sample a single $t$, and if $r(t)=r(t-1)$, then uniformly sample the batch size number of $t_i$'s that have $r(t_i)=r(t)$. Otherwise, if $r(t)\neq r(t-1)$, then we fill the entire batch with the same $t$, so there is no size mismatch. Since not all $t$'s change resolution, many of the $M_t$'s can be replaced by scalar multiplication with $(\text{a}_t/\text{a}_{t-1})=\sqrt{\alpha_t}$.

For sampling (Algo.~\ref{alg:sampling}), we start from a random Gaussian noise at the lowest resolution $r(T)$. Our model $x_{0,\theta}^{r(t-1)}$ predicts a clean image at the next resolution $r(t-1)$, using which we can calculate $\mu_{t-1}$ and denoise using the posterior distribution (Eq.~\ref{eq:reverse_step_ddpm_matrix_sigma_simplified_isotropic}). This also involves sampling from $\Sigma_{t\rightarrow t-1}$, which may not be isotropic, and hence we use Algo.~\ref{alg:sample_non_isotropic_noise} to sample noise from this distribution. Eq.~\ref{eq:reverse_step_ddpm_matrix_sigma_simplified_isotropic} is equivalent to DDPM sampling when $r(t)=r(t-1)$, so the non-isotropic noise sampling can be replaced with normal \code{torch.randn()} calls for resolution-preserving steps.

\begin{figure}[t]
\centering
\begin{minipage}{\linewidth}
\begin{algorithm}[H]
\caption{Train}
\label{alg:training}
\begin{lstlisting}
def train_iter(x, t, a_t_minus1, model, opt):
    opt.zero_grad()        
    t_minus1 = (t-1).clamp(min=0)
    # clean image at res r(t-1) = M_{1:t-1}(x) / a_{t-1}
    x_start_t_minus1 = cummulative_M[t_minus1](x)/a_t_minus1
    # Eq_diffuse_matrix
    x_t = diffuse(x, t) 
    pred_x_start_t_minus1 = model(x_t, t)
    # Eq_loss_ssd
    loss = min_snr_5(t) * ((pred_x_start_t_minus1 - x_start_t_minus1) ** 2)
    loss.backward()
    opt.step()
    return loss
\end{lstlisting}
\end{algorithm}
\end{minipage}
\begin{minipage}{\linewidth}
\begin{algorithm}[H]
\caption{Sampling}
\label{alg:sampling}
\begin{lstlisting}
# get x_{t-1} given x_t
def sample_iter(x_t, t, model):
    pred_x_start_t_minus1 = model(x_t, t)
    mu_t_minus1 = a_t_minus1 * pred_x_start_t_minus1
    # Eq_denoise_matrix
    posterior_noise = calculate_posterior_noise(t)
    posterior_mean = calculate_posterior_mean(x_t, mu_t_minus1, t)
    x_t_minus1 = posterior_mean + posterior_noise
    return x_t_minus1
\end{lstlisting}
\end{algorithm}
\end{minipage}
\vspace{-0.2in}
\end{figure}

\begin{figure*}
    \centering
    \includegraphics[width=0.9\linewidth]{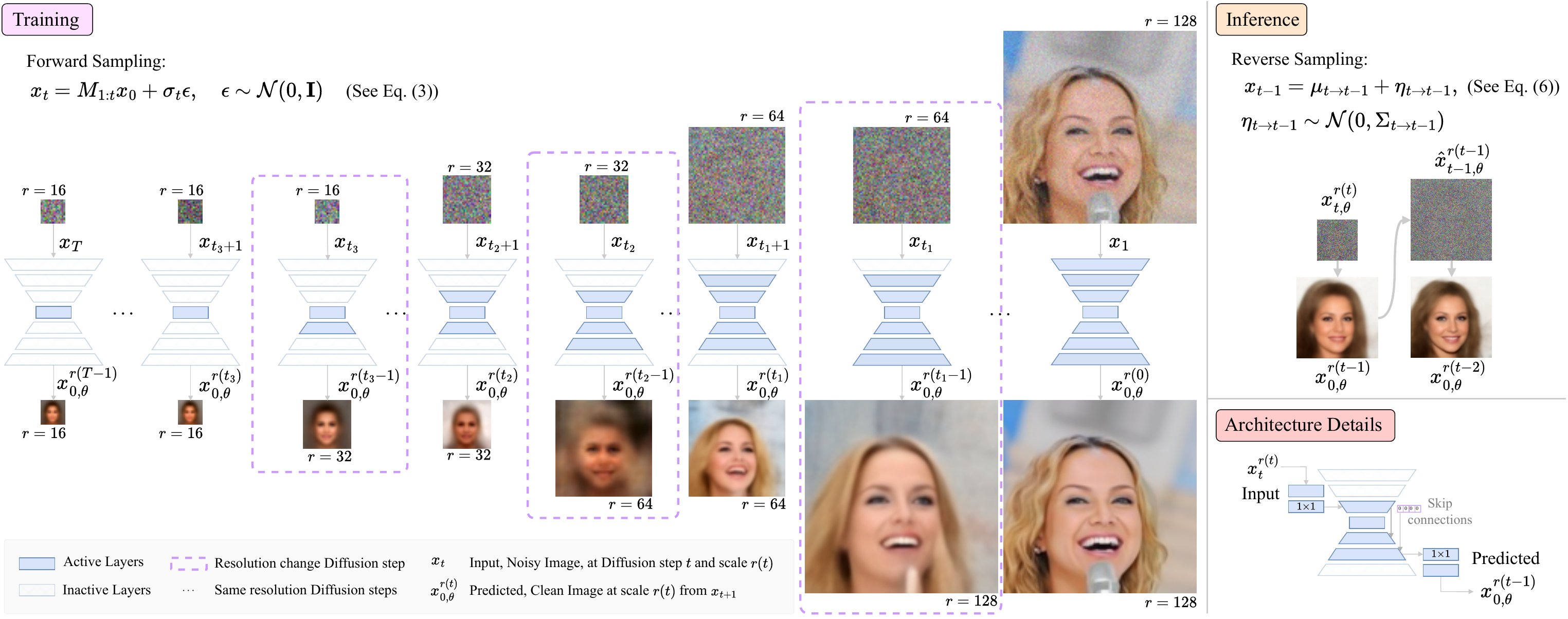}
    \caption{\textbf{Overview.} Left: During training $x_t$'s at resolution $r(t)$ are sampled using Eq.~\ref{eq:make_noisy_ssd_matrix}, and our model is trained to predict clean image $x_{0,\theta}^{r(t-1)}$ using the loss as in Eq.~\ref{eq:loss_ssd}. Our \ourmodel{} is able to process both resolution-preserving and resolution-changing steps at multiple resolution using only parts of the network. Right-top: During sampling, Eq.~\ref{eq:reverse_step_ddpm_matrix_sigma_simplified_isotropic} is used to progressively denoise and upsample to generate images. Right-bottom: Our \ourmodel{} has additional 1${\times}$1 Conv layers to take inputs at any UNet encoder block and get outputs form any decoder blocks. For resolution changing, the skip connections are fed with zero-filled tensors. }
    \label{fig:method}
    \vspace{-0.2in}
\end{figure*}

\subsection{Architecture}
\label{subsec:architecture}
We adapt the UNet architecture from Ablated Diffusion Model (ADM)~\cite{guided-diffusion} to design our proposed model FlexibleUNet (\ourmodel{}), which supports multi-resolution inputs and outputs to fully realize the scale-space formulation. 
Because \ours{} embeds a resizing operator in the forward diffusion process, the spatial resolution of $x_t$ varies across timesteps, and the reverse model must therefore operate on variable-sized noisy states and sometimes predict a higher-resolution output at the next scale (Fig.~\ref{fig:method}).

A standard diffusion model, such as ADM~\cite{guided-diffusion}, is trained to operate at a single fixed resolution throughout all timesteps, and even multiresolution UNet variants only process multiple scales \emph{within} a fixed-resolution diffusion process. In contrast, \oursshort{} requires an architecture that natively handles different input resolutions across timesteps. To address this, we explore two architectural designs.

\noindent \textbf{Full UNet (Single Path).} 
The base UNet architecture inherently supports variable-size inputs and outputs, and in principle can operate on any spatial resolution $R{\times}R$ as long as the kernel sizes, strides, padding, and pooling operations produce valid feature maps at every layer. However, this design has two key limitations for \ours{}. First, it requires the input and output resolutions to be equal. In our setting, certain timesteps involve a resolution transition, which would require the model to output at a higher resolution. To handle this, the input must be manually upsampled before entering the UNet whenever such a transition occurs.

Second, the depth of the UNet determines how many distinct spatial scales it can represent. For a UNet with $L$ downsampling blocks, the smallest internal resolution is $\frac{R}{2^{L-1}}$, which fixes the total number of scales to $L$. This number is typically small and does not grow with the input resolution. For example, the ADM architecture uses 4 feature map resolutions for $64{\times}64$, 5 for $128{\times}128$, and 6 for $256{\times}256$, meaning that across all these models the number of downsampling stages remains fixed at 4. Thus, even at higher resolutions, the network cannot represent more than a handful of scales, limiting the usefulness of a scale-space formulation where many more levels naturally exist.

\noindent \textbf{\ourmodel{}.} 
These limitations motivate our proposed architecture, \ourmodel{}, where different subsets of UNet layers are dynamically activated based on the input resolution. High-resolution inputs traverse the full UNet, while lower-resolution inputs are routed only through the deeper layers, effectively bypassing the early and late blocks. Since each block expects a specific channel dimensionality, we insert $1{\times}1$ conv layers to map the input features to the appropriate channel size while preserving spatial resolution.

For denoising steps that do not involve a resolution change, the active pathway through the UNet remains symmetric, using the same number of downsampling and upsampling blocks. When a resolution increase is required, the pathway becomes asymmetric: the model uses one additional upsampling block compared to the number of downsampling blocks encountered. In these cases, the skip connections that would normally come from the bypassed encoder blocks are replaced with zero tensors (Fig.~\ref{fig:method}). This design allows the model to share parameters across resolutions while supporting valid diffusion dynamics during resolution transitions.

\addtocontents{toc}{\protect\setcounter{tocdepth}{2}} %

\addtocontents{toc}{\protect\setcounter{tocdepth}{-1}}

\section{Experiments}
\label{sec:experiments}
\noindent \textbf{Datasets. } We perform experiments and analyze the performance of \ours{} on the CelebA dataset\cite{liu2015faceattributes} and the ImageNet dataset~\cite{deng2009imagenet}. The CelebA dataset consists of around 200K training images, while the ImageNet dataset contains around 1.3 million images from 1000 different classes. We use JPEG images for these datasets. We conduct experiments at $64{\times}64$ resolution for both CelebA and ImageNet. We additionally show experiments on $128{\times}128$ and $256{\times}256$ for CelebA dataset. The CelebA experiments helps us understand our method's scalability with increasing resolutions, while ImageNet helps in evaluating the model's ability to learn complex and diverse distributions. 
\begin{figure}[t]
    \centering
    \vspace{-0.5em}
    \includegraphics[width=\linewidth]{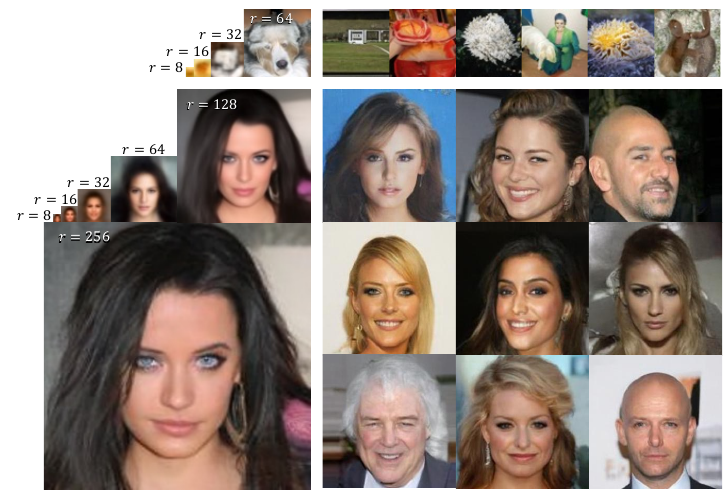}
    \caption{\textbf{Visual samples.} Top: ImageNet-64 unconditional generation. For the top-most sample we also show model prediction at various scales (8, 16, 32, 64) during \oursshort. Bottom: CelebA-256 unconditional generation. For the top-most sample we also show model predictions at various scales (8, 16, 32, 64, 128, 256).}
    \label{fig:viz}
    \vspace{-0.2in}
    \end{figure}
    
\noindent \textbf{Unconditional Image Generation. } We analyze and evaluate \ours{} on unconditional image generation, as it allows us to clearly study how scale-space theory integrates with the diffusion process.  

\noindent \textbf{Implementation Details. }
We use the ADM~\cite{guided-diffusion} repository as our base codebase and build our baselines (DDPM~\cite{ddpm}, Blurring Diffusion~\cite{hoogeboom2022blurring}), as well as \ours{} on top of it. For DDPM, we consider two standard parametrizations as our baselines, the $\epsilon$-prediction, and the $x_0$-prediction formulation. We train the baseline model with Min-SNR-$\gamma$ weighting for the $x_0$-parametrization to ensure an accurate comparison to \ours.  We implement Blurring Diffusion using the pseudo-code provided in their paper. 

\begin{table}[!t]
\begin{minipage}[t]{\linewidth}
\centering
\caption{\textbf{Main Results.} Unconditional image generation results on CelebA dataset over multiple resolutions. Training time is specified in hours. Average GFlops per iteration. Effective batch size is 128 for resolutions 64 and 128, 64 for resolution 256. Here BD refers to Blurring Diffusion~\cite{hoogeboom2022blurring} and all SSD models use our~\ourmodel{}~architecture. }\vspace{-0.1in}
\resizebox{1\textwidth}{!}{
    \label{tab:celeba_main_table}
    
    \begin{tabular}{@{}lccccccccc@{}}
        \toprule
        \multirow{2}{*}{Method} &  \multicolumn{3}{c}{CelebA-64 (1M iters)} &  \multicolumn{3}{c}{CelebA-128 (300K iters)} &  \multicolumn{3}{c}{CelebA-256 (300K iters)} \\
        \cmidrule[\cmidrulewidth](l){2-4} \cmidrule[\cmidrulewidth](l){5-7} \cmidrule[\cmidrulewidth](l){8-10}
         & FID & Time & GFlops  & FID & Time & GFlops & FID & Time & GFlops \\
         \midrule
        DDPM-$\epsilon$    & 2.22 & 70.30 & 60.05 &  4.16 & 50.50 & 132.30 &  5.52 & 87.31 & 497.03 \\
        DDPM-$x_0$         & 2.98 & 70.71 & --    &  3.50 & 50.33 & --     &  5.47 & 87.33 & -- \\
        BD & 2.06 & 71.79 & --    &  3.67 & --    & --     &  4.76 & 88.08 & -- \\
        \midrule
        \oursshort~(2L) &  2.14 & 62.63 & 50.61 &  --    & --    & --    & --    & --    & --    \\
        \oursshort~(3L) &  3.61 & 56.13 & 44.27 &  6.53  & 31.71 & 87.38 & 7.79  & 59.00 & 317.36 \\
        \oursshort~(4L) &  4.28 & 52.38 & 38.48 &  --    & --    & --    & 10.52 & 51.70 & 272.98\\
        \oursshort~(5L) &  --   & --    & --    &  10.47 & 25.41 & 66.72 &  --   & --    & 237.70 \\
        \oursshort~(6L) &  --   & --    & --    &  --    & --    & --    & 13.50 & 42.88 & 209.69\\
        \bottomrule
    \end{tabular}
    
    }
\end{minipage}
\vspace{-0.2in}
\end{table}

\begin{figure}[t]
\centering
\resizebox{0.9\linewidth}{!}{
\begin{minipage}{0.57\linewidth}
    \centering
\centering
    \includegraphics[width=\linewidth]{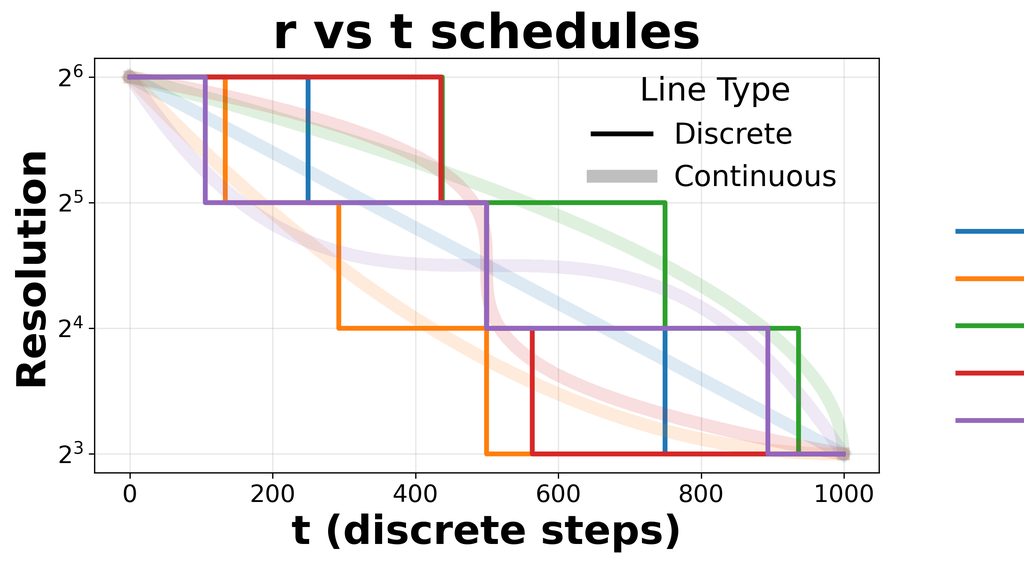}
\end{minipage}
\begin{minipage}{0.41\linewidth}
    \centering
    \vspace{2.2pt}
    \resizebox{\linewidth}{!}{%
        \begin{tabular}{@{}lcc@{}}
            \toprule
            Schedule & FID & Time (hrs) \\
            \midrule
            equal               & 9.64          & 12.88 \\
            ConvexDecay\_2      & 11.03         & \textbf{11.71} \\
            ConvexDecay\_0.5    & \textbf{4.87} & 13.81 \\
            SigmoidLikeDecay\_3 & 7.08          & 13.06 \\
            TanhLikeDecay\_3    & 8.09          & 12.50 \\
            \bottomrule
        \end{tabular}
    }
\end{minipage}
}
\vspace{-0.1in}
    \captionof{figure}{\textbf{Resolution Schedules.} Mapping diffusion timesteps $t$ to resolution $r$ across 4 scales. Both discrete and continuous variants are shown. The right shows FIDs at 500k iterations (batch size 8).}
    \label{fig:r-t_schedule}
\vspace{-0.3in}
\end{figure}

\begin{figure}[h]
\centering
\begin{minipage}{0.35\linewidth}
    \centering
    \captionof{table}{ \textbf{ImageNet-64 Results.} Unconditional image generation results on ImageNet-64 dataset.}
    \centering
    \resizebox{0.9\linewidth}{!}{
    \label{tab:imagenet64_main_table}
    \begin{tabular}{@{}lc@{}}
    \toprule
    Method & FID \\
     \midrule
    DDPM-$\epsilon$ & 12.82 \\
    DDPM-$x_0$ & 13.07 \\
    Blurring Diffusion & 15.34   \\
    \midrule
    \oursshort~(2L) & 13.08   \\
    \oursshort~(4L) & 17.89   \\
    \bottomrule
    \end{tabular}
    }
\end{minipage}
\hfill
\begin{minipage}{0.6\linewidth}
\vspace{0.12in}
    \centering
     \captionof{figure}{\textbf{Temporal Scaling.} Training time of proposed \oursshort~with our \architecturename{}~ across multiple resolutions.}
    \vspace{-0.1in}
    \label{fig:training_time_across_resolutions}
    \includegraphics[width=0.85\linewidth]{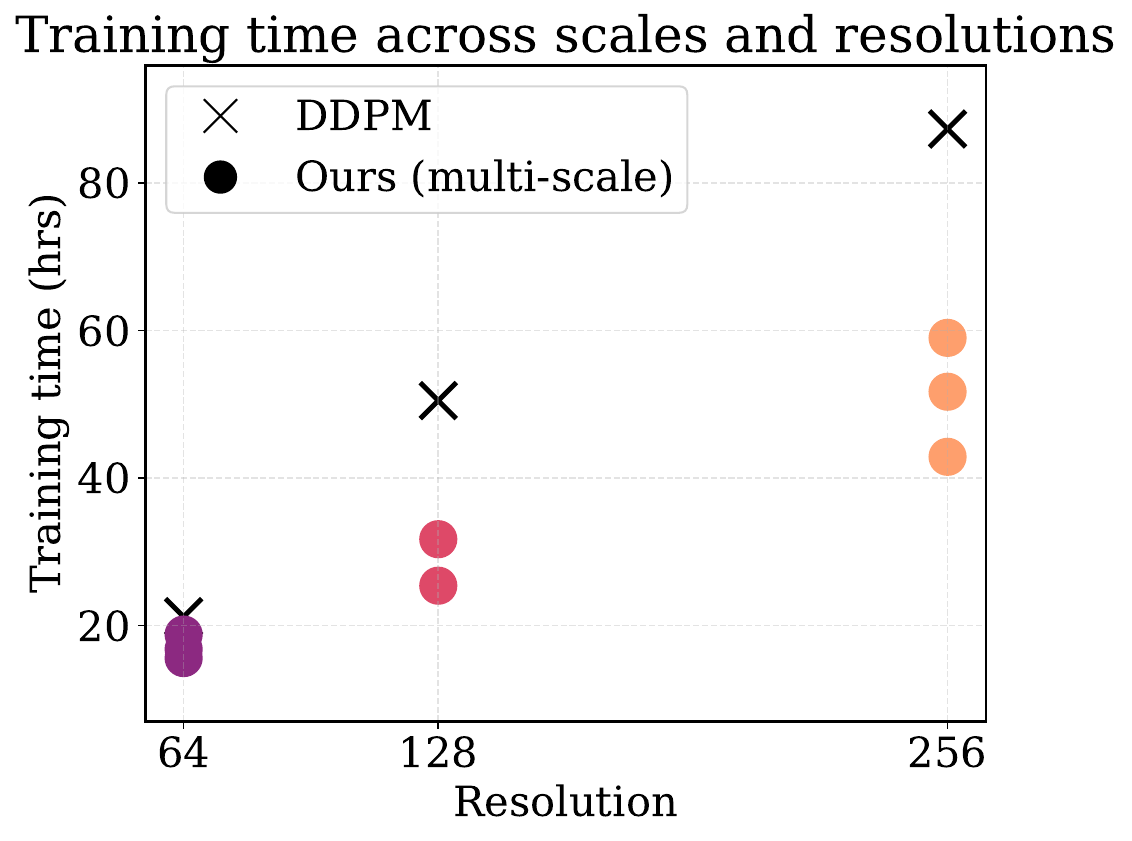}
\end{minipage}
\vspace{-0.3in}
\end{figure}

For the diffusion process, we follow the linear noise schedule proposed in DDPM~\cite{ddpm} and use the standard setting of 1000 timesteps. For training, we use AdamW~\cite{kingma2014adam, loshchilov2017decoupled} optimizer with a fixed learning rate. We conducted all our experiments on NVIDIA H100 and NVIDIA RTX A4000 GPUs. We maintained consistent combinations of learning rate and batch-size across dataset and resolutions. For $64{\times}64$ and $128{\times}128$, we used an effective batch size of 128, and trained the models either on a single H100, or on $4$ A4000 GPUs, with a per-GPU batch-size of 32. For $256{\times} 256$, we used an effective batch size of 64 due to memory constraints and trained them on $2$ H100s with a per-GPU batch size of 32. Our learning rate was set to $1{\times}10^{-4}$ for the $64{\times}64$ and $128{\times}128$ models, and $5{\times}10^{-5}$ for $256{\times}256$ model, following linear learning rate scaling. 

\noindent \textbf{Evaluation. }
We evaluate our models using the exponential moving average (EMA) weights with a decay rate of 0.9999. We assess the quality of generated images by computing FID~\cite{fid} scores on 50k samples \wrt the training set. We further compare \ours~with the DDPM baseline model in terms of training time, and FLOPs (Floating Point Operations) per forward pass. In addition to FLOPs, we report the sampling latency as the total time to generate a single image. 
All speed and compute metrics are measured on a single NVIDIA GH200 node.

\noindent \textbf{Main results. }
Our results are presented in Table~\ref{tab:celeba_main_table} and~\ref{tab:imagenet64_main_table}. We train the baseline models and \ours~model for 1 million iterations for CelebA-64 and 300k iterations for CelebA-128 and CelebA-256. We report the total training time, average GFLOPs per iteration, and the FID value. We notice that increasing the number of levels significantly decreases training time and GFLOPs. \oursshort~(6L) at 256 resolution takes less than half the time as the baseline DDPM. 
Table~\ref{tab:imagenet64_main_table} shows that ~\oursshort, trained for 1 million iterations, achieves comparable performance to baselines even on the harder ImageNet-64 benchmark.
Figure~\ref{fig:training_time_across_resolutions} shows that training time for \oursshort{} scales well with increasing resolution. Please refer to the supplementary Sec.~\ref{sec:more_comparisons} for more comparisons.

\noindent \textbf{Qualitative results. } We present qualitative results of our method on ImageNet-64 with \oursshort~(4L) and CelebA-256 with \oursshort~(6L) in Fig.~\ref{fig:viz}. We also show multiple intermediate predictions of the model in ~\oursshort.

\noindent \textbf{Individual effectiveness of our mathematical formulation vs architecture.} In the supplementary Sec.~\ref{sec:parts_of_approach}, we first show that our Generalized Linear Diffusion Process works, albeit suboptimally, even without \ourmodel, and also with an alternate degradation. Approximating anisotropic gaussian noise with isotropic leads to saturation artifacts, showing the need for our anisotropic sampling. Secondly, we show that \ourmodel{} is effective on its own for other formulations of iterative multi-resolution pixel-space generation, albeit suboptimal. We do this by applying it to approximate multi-resolution diffusion as in PyramidalFlow~\cite{jin2024pyramidal, chen2025pixelflow, jeong2025upsample}.

\noindent \textbf{Resolution Schedule.} $r(t)$ specifies the spatial resolution as a function of diffusion timestep. We present 5 different resolution schedules in Figure~\ref{fig:r-t_schedule} (left) at 4 levels  $(64, 32, 16, 8)$ 
(refer to the supplementary Sec.~\ref{sec:res_sch} for details) and note their effects for a CelebA-64 model in Fig.~\ref{fig:r-t_schedule} (right). We observe that schedules that spend the least number of timesteps at the higher resolutions train the fastest, but also yield the worst FID (\ie ConvexDecay\_2). In contrast, the model trained with ConvexDecay\_0.5, which spends the most steps at the highest resolution, achieves the best FID, but requires the longest training time. We use this for all our experiments.

\noindent \textbf{Full UNet vs \ourmodel{}.} In Table~\ref{tab:ablation-architecture}, we observe that \ourmodel{} has slightly better FID for both 2L and 4L, while being faster than the Full UNet. Hence, we use \ourmodel{}.

\noindent \textbf{Sampling.} Use of Lanczos has negligible overhead. Further, \oursshort~does not suffer from performance drop, like DDPM, on sampling steps reduction. (Refer supplementary Sec.~\ref{sec:quant_results}.) 

\begin{table}
  \caption{\textbf{Architecture Ablation.} FID (at 500K iterations on CelebA-64) and Inference time (in secs/generation, 1000 steps, batch size=1, on 1{}${\times}${} A4000) of network architecture variants at 2 and 4 levels.}
  \vspace{-1em}
  \label{tab:ablation-architecture}
  \centering
\resizebox{0.5\linewidth}{!}{
    \begin{tabular}{@{}lccc@{}}
    \toprule
    \multirow{2}{*}{Method} & \multicolumn{1}{c}{FID} & \multicolumn{2}{c}{Inference time} \\
    \cmidrule[\cmidrulewidth](l){2-2}
    \cmidrule[\cmidrulewidth](l){3-4}
     & res. 64 & res.64 & res. 256 \\ 
    \midrule
    Full Unet, 2L   & 2.33          & 16.19 & 43.07\\
    \ourmodel{}, 2L & \textbf{2.26} & \textbf{15.38} & \textbf{38.99}\\
    \midrule
    Full Unet, 4L   & 4.90          & 16.28 & 34.74\\
    \ourmodel{}, 4L & \textbf{4.87} & \textbf{13.43} & \textbf{31.08}\\
    \bottomrule
  \end{tabular}
  }
    \vspace{-2em}
\end{table}

\addtocontents{toc}{\protect\setcounter{tocdepth}{2}} %

\noindent \textbf{Conclusion.} We showed that diffusion models and scale spaces share an information hierarchy, and we quantified this connection mathematically. Observing that highly noised diffusion states contain only low-resolution information, we introduced a generalized family of diffusion models that embeds scale-space structure into the forward process, yielding \ours. To realize this in practice, we proposed the \ourmodel{} architecture and demonstrated its effectiveness on unconditional image generation.

\noindent \textbf{Acknowledgment.} This research is based upon work supported by the Office of the Director of National Intelligence (ODNI), Intelligence Advanced Research Projects Activity (IARPA), via IARPA R\&D Contract No. 140D0423C0076. The views and conclusions contained herein are those of the authors and should not be interpreted as necessarily representing the official policies or endorsements, either expressed or implied, of the ODNI, IARPA, or the U.S. Government. The U.S. Government is authorized to reproduce and distribute reprints for Governmental purposes notwithstanding any copyright annotation thereon.

\clearpage
\setcounter{page}{1}
\maketitlesupplementary

\begin{figure}[t]
    \centering
     \animategraphics[width=0.5\textwidth, loop, autoplay]{8}%
    {figures/supp/pred_x0/frame_}%
    {0}%
    {99}%
    \caption{Animation of the predicted clean image $x_{0,\theta}^{r(t-1)}$ over the generation process for gradual downsizing degradation operator in \oursshort~framework. (Best viewed in Adobe Reader).}
    \label{fig:gradual_downsizing_x0_animated}
\end{figure}

\tableofcontents

\section{Clarifications}

We add some clarifications for our main paper here.

\begin{enumerate}
    \item All variable names used in Algo.~\ref{alg:implicit_linear_operator}, \ref{alg:implicit_linear_operator_transpose}, and \ref{alg:sample_non_isotropic_noise} have their usual meanings. For example, \code{a\_t}, \code{a\_t\_minus1} denote the signal coefficients $\text{a}_t$, $\text{a}_{t-1}$ respectively, while \code{sigma\_t}, \code{sigma\_tminus1} denote the noise schedule $\sigma_t$, $\sigma_{t-1}$ respectively. \code{size\_out} denotes the (height, width) of the output of \code{M} operator.
\end{enumerate}

 \begin{figure}[t]
    \centering
     \animategraphics[width=0.5\textwidth, loop, autoplay]{8}%
    {figures/supp/x_t/frame_}%
    {0}%
    {100}%
    \caption{Animation of the noisy intermediate state $x_t$ over the generation process for the gradual downsizing degradation operator in \oursshort~framework. (Best viewed in Adobe Reader).}
    \label{fig:gradual_downsizing_xt_animated}
\end{figure}

\section{Future Works}

The focus of this work has been to analyze the connection between diffusion models and scale space theory, while proposing to merge them using \ours~with \ourmodel{}.
We do not use any advanced techniques to tune our framework or architectures for the most optimal performance. Instead, we use the standard hyperparameters from the base codebase to keep the choices simple and the number of experiments under check given the expense of each training. The use of advanced techniques is out of scope for this work given the conference length manuscript. 

However, there are multiple future exploration directions which have high potential for improvement in performance.
For example, adapting newer diffusion samplers instead of using DDPM-style samplers can improve both performance and inference speeds. Similarly, progressive curriculum learning for different layers or resolutions, as done by works with multi-resolution trainings~\cite{karras2017progressivegan, gu2023matryoshka}, should also yield improvement in training optimization. 

\noindent \textbf{Why not use a Transformer-based architecture?} 
The two most popularly used architectures in diffusion are -- convolutional UNet~\cite{unet} based ADM~\cite{guided-diffusion}, and vision transformer (ViT) based DiT~\cite{peebles2023dit}. 
Another popular architecture is U-ViT~\cite{bao2023all}, that combines the skip connections from UNet with a ViT architecture.
One thing to note is that, DiT was designed for latent spaces and hence did not take into consideration the blowing up of the quadratic complexity of the attention mechanism when applied in the pixel space~\cite{chen2025pixelflow}. U-ViT acknowledges this issue, and explicitly works in a latent space for higher resolutions.  
Newer works like HDiT~\cite{crowson2024scalable} try to mitigate this issue using neighborhood attention instead of global attention is all layers. 
But such non-trivial design decisions in the architecture can develop into confounding factors. 
Since our goal is to understand how scale-spaces can be integrated into diffusion models, for simplicity we stick to the standard ADM base architecture, a widely used pixel and latent diffusion architecture~\cite{rombach2022ldm}.
Nonetheless, for future work, a similar integration of scale-space theory should also be explored with transformer based architectures.

\section{Additional Material}

\subsection{Hyperparameters}
The set of hyperparameters that we use for each dataset is summarized in Table~\ref{tab:hyperparameters}. We also note additional experimental details in Table~\ref{tab:implementation_details}. 

\begin{table}[]
    \centering
    \resizebox{0.9\linewidth}{!}{
    \begin{tabular}{lcccc}
        \toprule
        Hyperparameter & CelebA-64 & CelebA-128 & CelebA-256 & ImageNet-64 \\
        \midrule
        Noise Schedule & Linear & Linear & Linear & Linear \\
        Denoising Steps & 1000 & 1000 & 1000 & 1000 \\
        Optimizer & AdamW & AdamW & AdamW & AdamW \\
        Batch Size & 128 & 128 & 64 & 128 \\
        Learning Rate & 0.0001 & 0.0001 & 0.00005 & 0.0001 \\
        Number of Iterations & 1 million & 300k & 300k & 1 million \\
        \bottomrule
    \end{tabular}
    }
    \caption{Hyperparameters for all datasets.}
    \label{tab:hyperparameters}
\end{table}

\begin{table}[]
    \centering
    \resizebox{\linewidth}{!}{
    \begin{tabular}{lcccc}
        \toprule
        Implementation Choice & DDPM-$\epsilon$ & DDPM-$x_0$ & Blurring Diffusion & \oursshort{} \\
        \midrule
        Reverse Process Variance & fixed-large & fixed-large & fixed-small & fixed-small \\
        Loss & $L_\text{simple}$ & $L_\text{simple} + \text{Min-SNR-5}$ & $L_\text{simple}$ & $L_\text{simple} + \text{Min-SNR-5}$ \\
        \bottomrule
    \end{tabular}
    }
    \caption{Additional implementation details.}
    \label{tab:implementation_details}
\end{table}
\begin{table}
  \caption{Inference time. By default we use DDPM sampling, but we also show 25$^{\dagger}$ steps DDIM~\cite{song2020ddim} speeds.}
  \label{tab:infer_time}
  \centering
  \resizebox{0.7\linewidth}{!}{
  \begin{tabular}{@{}lccc@{}}
    \toprule
    Method & \#Steps & Speedup (Inference Time) & FID \\
    \midrule
    \multirow{3}{*}{DDPM-$x_0$} & 1000 & 1.00 $\times$  & 2.98 \\
     &  250 & 4.18 $\times$  & 14.00 \\
     & 25$^{\dagger}$ & 38.87 $\times$  & 4.70 \\
    \midrule 
    \multirow{3}{*}{DDPM-$\epsilon$} & 1000 & 1.05 $\times$  & 2.22 \\
     &  250 & 4.18 $\times$  & 11.02 \\
     & 25$^{\dagger}$ & 38.06 $\times$  & 3.76 \\
     \midrule 
    \multirow{2}{*}{\oursshort (\ourmodel{}, 2L)} & 1000 & 1.18 $\times$  & 2.14 \\
    & 250 & 4.80 $\times$  & 2.87\\
    \midrule 
    \multirow{2}{*}{\oursshort (\ourmodel{}, 4L)} & 1000 & 1.58 $\times$ & 4.28 \\
    & 250 & 5.91 $\times$  & 4.90\\
    \bottomrule
  \end{tabular}
  }
\end{table}

\begin{table}
    \begin{minipage}{\linewidth}
      \caption{\footnotesize Inference time (in secs) per gen at 64 res (1000 steps, bs=1, 1 A4000): Lanczos sampling vs. torch.randn call.}
\label{tab:lanczos}
      \centering
      \resizebox{0.5\textwidth}{!}{
      \setlength\tabcolsep{4pt}
      \begin{tabular}{@{}lcc@{}} 
        \toprule
        Method & SSD (2L) & SSD (4L)\\
        \midrule
        w/ Lanczos & 15.38 & 13.43\\
        w/o Lanczos & 15.35 & 13.40 \\ 
        \bottomrule
    \end{tabular}
    }
    \end{minipage}%
    \hfill
    \begin{minipage}{.01\linewidth}
    \end{minipage}
    \vspace{-1em}
\end{table}

\subsection{Parts of our Approach}
\label{sec:parts_of_approach}

Our approach consists of two parts. The first part is the \ours~mathematical formulation and the second part is the \ourmodel{} architecture. In the main paper, we have presented the combination of both parts as our complete approach. But here we also want to show that each part is effective on its own. So, in Section~\ref{sec:ssd_validity}, we explore whether the mathematics behind \oursshort~can be applied without a modified architecture, while in Section~\ref{sec:our_arch_validity}, we check if \ourmodel{} can be used without our mathematical framework, summarized in Table~\ref{tab:parts_of_our_approach}.
\begin{table}[h]
\caption{ Parts of our approach, and validity of each part.}
\centering
\resizebox{0.5\linewidth}{!}{
\label{tab:parts_of_our_approach}
\begin{tabular}{@{}lcc@{}}
\toprule
 & \oursshort & \ourmodel{} \\
 \midrule
Section~\ref{sec:ssd_validity} & \ding{51} & \ding{55} \\
Section~\ref{sec:our_arch_validity} & \ding{55} & \ding{51} \\
Main paper& \ding{51} & \ding{51} \\
\bottomrule
\end{tabular}
}
\end{table}

\subsubsection{Validity of \oursshort}
\label{sec:ssd_validity}
One way to verify whether \oursshort~ framework works without using a modified architecture is to assume that the actual states of the diffusion model are at a certain resolution, but when passing through the model, we resize them to the model input size. Similarly, the model outputs are resized to the required output resolution before applying losses. We test this with CelebA-32 dataset just to check the correctness of \oursshort. For this, we use a DDPM reimplementation (not ours) optimized for resolution 32 images  \cite{weilai_futurexiangdiffusion_2025}, since ADM's codebase does not support that resolution, and a smaller resolution is faster to verify on. We train these models for 300 epochs and use 5 steps of resolutions (2, 4, 8, 16, 32). We note their FIDs in Table~\ref{tab:celeba32_onlymath}.

\begin{table}
\caption{Results of only \oursshort~(w/o \ourmodel{}) on CelebA-32. (Here we resize the inputs to the model input resolution.)}
\centering
\resizebox{0.8\linewidth}{!}{
\label{tab:celeba32_onlymath}
\begin{tabular}{@{}lc@{}}
\toprule
Method & FID \\
 \midrule
DDPM-$\epsilon$ & 2.85 \\
\oursshort~(w/o \ourmodel{}, 5L) & 5.55   \\ %
\oursshort~(w/o \ourmodel{}, gradual downsizing) & 4.10   \\ %
\bottomrule
\end{tabular}
}
\end{table}

\noindent \textbf{Alternative Degradation.}
All the degradations used in this work till now have been 2${\times}$ downsampling. However, given the general nature of the theory, it is not limited to just this choice. Here we test using a gradual downsizing instead of 2${\times}$ downsizing steps. In this degradation, whenever the resolution changes, it does so by only 1 pixel at a time. We try going from $2\rightarrow32$. We report its FID in Table~\ref{tab:celeba32_onlymath}. We show some static visual results in Fig.~\ref{fig:gradual_downsizing}. We show some interesting animated visualizations (view in Adobe Reader) in Fig.~\ref{fig:gradual_downsizing_x0_animated}, and Fig.~\ref{fig:gradual_downsizing_xt_animated}.

\begin{figure*}[t]
    \centering
    \begin{subfigure}{0.33\linewidth}
        \centering
        \includegraphics[width=\linewidth]{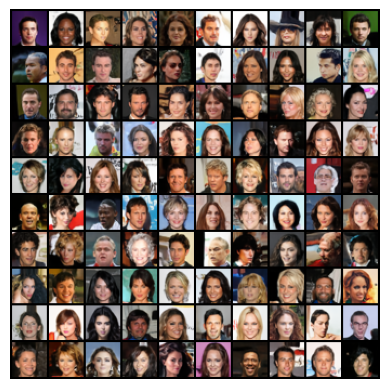}
        \caption{Generated Samples}
    \end{subfigure}
    \hfill
    \begin{subfigure}{0.33\linewidth}
        \centering
        \includegraphics[width=\linewidth]{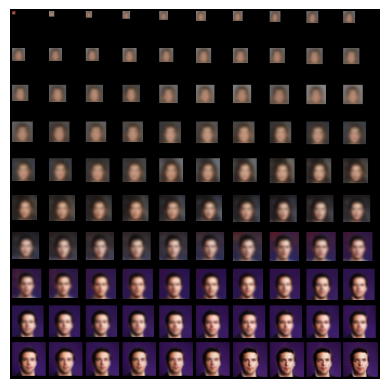}
        \caption{Predicted clean images $x_{0,\theta}^{r(t-1)}$}
    \end{subfigure}
    \hfill
    \begin{subfigure}{0.33\linewidth}
        \centering
        \includegraphics[width=\linewidth]{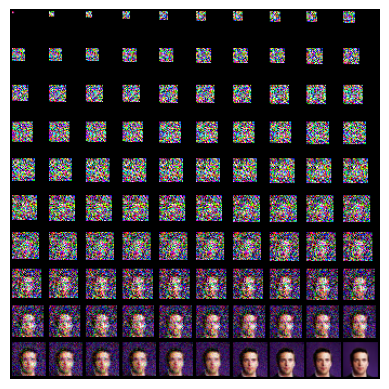}
        \caption{Noisy states $x_t$}
    \end{subfigure}

    \caption{Visual results of \oursshort~ with gradual downsizing degradation (1 pixel downsizing instead of 2${\times}$ downsizing)}
    \label{fig:gradual_downsizing}
\end{figure*}

\noindent \textbf{Effect of Isotropic Approximation.}
Another thing we wanted to test was whether we could approximate the non-isotropic Gaussian noise sampling (Algo.~\ref{alg:sample_non_isotropic_noise}) with isotropic Gaussian noise. For testing purposes, during the generation procedure (of the gradual downsizing degradation case), in the resolution changing steps, we first use Algo.~\ref{alg:sample_non_isotropic_noise} to sample non-isotropic noise, and then find the mean and variance over the height and width dimensions of this noise tensor. Instead of using the sampled non-isotropic noise for the stochasticity in Eq.~\ref{eq:reverse_step_ddpm_matrix_sigma_simplified_isotropic}, we instead use an isotropic noise sampled using \code{torch.randn()} with the calculated mean and variance. As seen in Fig.~\ref{fig:iso_approx}, this leads to the colors becoming flat and saturated, despite having facial structures. This shows that the assumption of isotropic covariance for the reverse process may not actually be valid, as assumed in ~\cite{abu2023udpm}. And we need to sample from non-isotropic Gaussians depending upon the linear operator.
\begin{figure}[h!]
    \centering
    \includegraphics[width=0.7\linewidth]{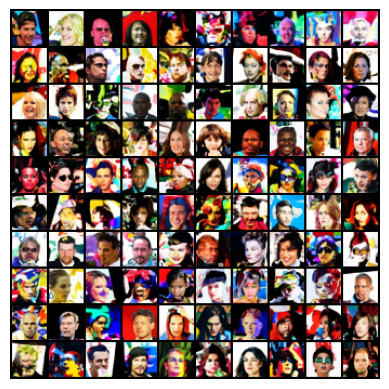}
    \caption{Effect of using isotropic noise instead of non-isotropic noise in the reverse diffusion process of \oursshort.}
    \label{fig:iso_approx}
\end{figure}

\subsubsection{Effectiveness of \ourmodel{}}
\label{sec:our_arch_validity}

In this section, we demonstrate that \ourmodel{} can naturally accommodate different formulations of the diffusion process to support multi-resolution inputs and outputs. To do so, we build upon previous works that introduce corrective noise when an upsampling operation is performed in the diffusion process~\cite{jin2024pyramidal, chen2025pixelflow, jeong2025upsample}. We implement these ideas within \ourmodel{} to both validate the flexibility of our architecture and quantify the computational benefits obtained from operating across resolutions.
A key challenge addressed in these works is the distribution mismatch that arises when a noisy latent is upsampled. Prior works~\cite{jeong2025upsample} show that applying a $2\times$ nearest-neighbor upsampling step produces a block-structured covariance that deviates from the forward diffusion trajectory. Their solution injects structured noise and identifies an adjusted timestep that realigns the upsampled latent with the original process. While this motivates our analysis, our setting is different from this in two ways: a) we operate entirely in pixel-space rather than latent space, and b) we consider multiple (more than one) upsampling stages throughout the denoising process. With these conditions in mind, our setup is as follows:

Let $x_t^r$ be a valid DDPM forward state at a timestep t for resolution r: 
\begin{equation*}
\begin{aligned}
    x_t^r &\sim \mathcal{N}\!\left(\sqrt{\bar{\alpha}_t}\, x_0^r,\; (1-\bar{\alpha}_t) I \right), \\
    \text{Let } x_t^R &= \mathrm{Upsample}(x_t^r),  \\
    x_t^R &\sim \mathcal{N}\!\left(\sqrt{\bar{\alpha}_t}\, U x_0^r,\; (1-\bar{\alpha}_t)\, U U^{\top}\right) \\
    &\text{ where } U \text{ is the Upsampling matrix}. 
\end{aligned}
\end{equation*}

Let $x_s^R$ be a valid DDPM forward state at some other timestep $s$ for resolution $R$:
\begin{equation*}
    x_s^R \sim 
    \mathcal{N}\!\left(
        \sqrt{\bar{\alpha}_s}\, x_0^R,\;
        (1-\bar{\alpha}_s)\, I
    \right),
    \qquad 
\end{equation*}

The upsampled state $x_t^R$ has covariance proportional to $UU^{\top}$, which differs from the isotropic Gaussian noise assumed by the DDPM forward process at resolution $R$. To correct this mismatch, we add corrective noise and roll back to a previous timestep. Let the corrected sample be
\[
    \tilde{x}_t^{R} = a\, x_t^R + b\, z,\qquad 
    z \sim \mathcal{N}\!\left(0,\; I - c\, U U^{\top}\right).
\]

Then the distribution of $\tilde{x}_t^{R}$ is
\begin{equation*}
\tilde{x}_t^{R} 
\sim 
\mathcal{N}\!\left(
    a \sqrt{\bar{\alpha}_t}\, U x_0^r,\;
    a^2 (1-\bar{\alpha}_t)\, U U^{\top}
    + 
    b^2 \left(I - c\, U U^{\top}\right)
\right).
\end{equation*}

We make an approximation to match the mean and covariance of $\tilde{x}_t^{R}$ to $x_s^R$
\begin{equation}
\label{eq:distribution_matching}
\begin{split}
    a^2 \bar{\alpha}_t = \bar{\alpha}_s \\
    b^2 = 1 - \bar{\alpha}_s \\
    a^2 (1-\bar{\alpha}_t) = b^2 c
\end{split}
\end{equation}

Solving the three equations mentioned in Equation~\ref{eq:distribution_matching}, gives us
\begin{equation}
    c 
    = 
    \frac{a^2 (1-\bar{\alpha}_t)}{b^2}
    =
    \frac{\bar{\alpha}_s (1-\bar{\alpha}_t)}
         {\bar{\alpha}_t (1-\bar{\alpha}_s)} 
    \label{eq:c_backtracking}
\end{equation}

We first obtain the value of $\bar{\alpha}_s$ that satisfies Equation~\ref{eq:c_backtracking} for a given choice of $c$, and then obtain the corresponding timestep $s$. We sweep through values of $c$ in range $0 \leq c \leq 0.25$ (as mentioned in~\cite{jeong2025upsample}) to produce different values of $s$. We compute all such candidate values of $s$ and pick the best $s$ empirically. For each value of $c$, we generate the corrected samples $\tilde{x}_t^{R}$ and the corresponding DDPM forward samples $x_s^R$ using 2048 training images. We then compute the Jensen–Shannon divergence between these distributions to obtain the final backtracking index 
$s$ as the one that produces the minimum JS divergence.

This experiment serves as our validation of our proposed method \ourmodel{}. During training, we follow a specific resolution schedule, so that for each timestep $t$, the model receives a state $x_t^{r(t)}$. To support distribution correction, we additionally include timestep $s$, $\tilde{x}_t^{R}$ to the training samples. During inference, the denoising process follows the standard reverse diffusion trajectory, with the following change: whenever the process reaches a timestep that has an upsampling step, the model rolls back to a slightly earlier timestep and continues denoising from that point at the higher resolution. This experiment illustrates the computational advantages of operating at multiple resolutions, using an architecture like \ourmodel{}, as a lot of the early denoising occurs at lower spatial resolutions. However, this setup requires rollback around each upsampling point, creating overlapping steps in the reverse process. While this model provides computational savings, there is an additional overhead of denoising for additional timesteps. 

In Table~\ref{tab:flexi_unet_backtracking}, we show the FID values obtained for this experiment after training the model for 500k iterations. We compare the performance of ~\ourmodel{} trained with \oursshort{} to \ourmodel{} trained without \oursshort{}. We observe that \ourmodel{} with \oursshort{} has better FID values, while also being faster at inference. 

\begin{table}
\caption{Results of \ourmodel{}~(w/o \oursshort{}) on CelebA-64. Computed at 500k iterations. Inference time is computed as the average time (in minutes) to generate a batch of samples (256 samples).}
\centering
\resizebox{0.8\linewidth}{!}{
\label{tab:flexi_unet_backtracking}
\begin{tabular}{@{}lcc}
\toprule
Method & FID & Inference Time \\
 \midrule
\ourmodel{}~(w/o \oursshort{}, Equal, 2L) & 2.44 & 15.52\\
\ourmodel{}~(w/o \oursshort{}, Equal, 4L) & 5.79 & 13.32 \\
\ourmodel{}~(w/ \oursshort{}, ConvexDecay0.5, 2L) & 2.26 & 14.98 \\
\ourmodel{}~(w/ \oursshort{}, ConvexDecay0.5, 4L) &  4.87 & 11.20\\
\bottomrule
\end{tabular}
}
\end{table}

\subsection{Resolution Schedules}
\label{sec:res_sch}
Here, we will define the functions that we used for the resolution schedules. We define what the resolution of the image should be given the diffusion timestep $t$, using a function $r(t)$. As shown in Fig.~\ref{fig:r-t_schedule}, we use a discrete version of the resolution schedule, but it is based on a continuous function. Suppose for the discrete version we use a list of resolutions $[r_{\text{min}},2r_{\text{min}},\ldots, 2^{n-2}r_{\text{min}}, 2^{n-1}r_{\text{min}}] $ where $r_{\text{min}}$ is the smallest resolution and $n$ is the number of resolutions. For the continuous version, let's first define normalized time $\tau=t/(T-1)$, where $T$ denotes the number of diffusion states. Then the normalized time to resolution schedule is defined as:
$$r_{\text{cont}}(\tau)=r_{\text{min}}\cdot2^{(n-1)f(\tau)}$$
where $f(\tau)$ is the exponential schedule function that works as the multiplier to the exponent of $2$. For example, when $f(\tau)=0$, then $r_{\text{cont}}(\tau)=r_{\text{min}}$, while when $f(\tau)=1$, then $r_{\text{cont}}(\tau)=r_{\text{max}}=2^{n-1}r_{\text{min}}$.

For the discrete version, we want to similarly sample from $R=[r_{\text{min}},2r_{\text{min}},\ldots, 2^{n-2}r_{\text{min}}, 2^{n-1}r_{\text{min}}=r_{\text{max}}]$, using the same schedule but over these discrete values. So, here we instead index the schedule function $i(\tau)$ that gives the index to select from $R$ given $\tau$. 
$$r(\tau)=R[i(\tau)]$$
Similar to $f$, when $i(\tau)=0$, we have $r(\tau)=r_{\text{min}}$, and when $i(\tau)=1$, $r(\tau) = r_{\text{max}}$. Now we can introduce our schedules.    
\subsubsection{Equal}
This is the easiest linear schedule.
\begin{itemize}
    \item Continuous: $f(\tau)=1-\tau$
    \item Discrete: $i(\tau)=n-1-\lfloor n\tau \rfloor$
\end{itemize}
\subsubsection{ConvexDecay\_$\gamma$}
With a $\gamma>0$ parameter, this function can simulate a convex or concave function depending on this parameter. 
\begin{itemize}
    \item Continuous: $f(\tau)=1-(1-\tau)^\gamma$
    \item Discrete: $i(\tau)=n-1-\lfloor nf(\tau) \rfloor$
\end{itemize}
For $\gamma>1$, it shows slow decay first, then faster, while for $\gamma<1$, fast decay first, then slower.
\subsubsection{TanhLikeDecay\_$\gamma$}
Here we wanted a function that looks like $\text{tanh}(\cdot)$ function, which is steep at the highest and the lowest timesteps but is flat in the middle. This essentially spends more time in the middle resolutions. We approximate this using a polynomial. 

First, we define a polynomial over a variable $u\in[-0.5,0.5]$ as follows:
$$x(u,\gamma)=\text{sign}(u)|u|^\gamma+0.5$$
$$p(x)=-2x^3+3x^2-0.5$$
The polynomial $p(x)$ is monotonically increasing in the range of $[-0.5, 0.5]$ for $x\in[0,1]$, while $x(u)$ is a function that looks like the $\text{tanh}(\cdot)$ function but is centered around $0.5$. Essentially, $p(x(u))$ looks like the $\text{tanh}$ shape and is centered around the origin, but has varying range dependent on $\gamma$. We want this function to be equal to 1 at $u=0.5$ and -1 at $u=-0.5$. To achieve that, we normalize this function:
$$\hat{p}(u, \gamma)=\frac{p(x(u, \gamma))}{p(x(0.5, \gamma))}$$
Finally, to shift this function from $[-0.5,0.5]\rightarrow[-1,1]$ to $[0,1]\rightarrow[0,1]$, we apply the following transformation:
$$\text{tanh\_like}(u, \gamma)=0.5\cdot\hat{p}(x(u-0.5, \gamma))+0.5$$
\begin{figure}[h!]
    \centering
    \includegraphics[width=\linewidth]{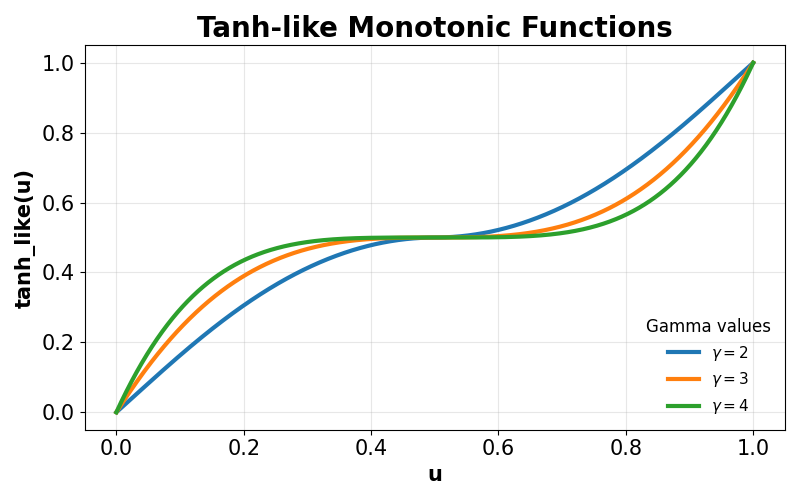}
    \caption{Visualization of $\text{tanh\_like}(\cdot)$ for different $\gamma$'s.}
    \label{fig:tanh_like_styles}
\end{figure}

Now, based on this definition, we can define the schedule.
\begin{itemize}
    \item Continuous: $f(\tau)=1-\text{tanh\_like}(1-\tau, \gamma)$
    \item Discrete: $i(\tau)=\lfloor nf(\tau) \rfloor$
\end{itemize}
\subsubsection{SigmoidLikeDecay\_$\gamma$}
Here we want a simoid-like curve, \ie, steep in the middle while flatter at the beginning and the end.
We can such a curve by inverting $\hat{p}(\cdot)$. Following similar stretching and normalization, we can define another function that goes from $[-0.5,0.5]\rightarrow[-1,1]$ as:
$$\hat{h}(u) = \frac{(0.5\cdot\hat{p})^{-1}(u, \gamma)}{(0.5\cdot\hat{p})^{-1}(-0.5, \gamma)}$$.
Using the same shifting to transform $[-0.5,0.5]\rightarrow[-1,1]$ to $[0,1]\rightarrow[0,1]$, we have:
$$\text{sigmoid\_like}(u, \gamma)=0.5\cdot\hat{h}(x(u-0.5, \gamma))+0.5$$
Finally, we define the schedule.
\begin{itemize}
    \item Continuous: $f(\tau)=1-\text{sigmoid\_like}(1-\tau, \gamma)$
    \item Discrete: $i(\tau)=\lfloor nf(\tau) \rfloor$
\end{itemize}

\subsection{More Comparisons}
\label{sec:more_comparisons}
\noindent \textbf{Upsampling Diffusion Probabilistic Models (UDPM)~\cite{abu2023udpm}.} The mathematical formulation and implementation of \oursshort~ can be viewed as a generalization of UDPM. However, UDPM should be considered as a GAN instead of diffusion, as their performance degrades without perceptual and adversarial losses (Table~\ref{tab:udpm}) and generations are washed out (Fig.~\ref{fig:udpm}, right). Nonetheless, even without extra losses, \oursshort~outperforms UDPM in FID and training time. 
Furthermore, UDPM has not been tested at resolutions higher than 64.

\noindent \textbf{Latent Diffusion Models (LDM)~\cite{rombach2022ldm}.} LDMs operate in latent space with different architectures and rely on a compute-intensive pipeline, including two-stage VAE training on large-scale datasets such as OpenImages, making fair comparison difficult. Nonetheless, Table~\ref{tab:LDM} shows \oursshort~ (6L) is faster than LDM. Moreover, \oursshort~ can be applied in latent space as a multi-resolution interpolation degradation, enabling more efficient Scale Space LDMs.

\noindent \textbf{Low-res diffusion + super-res.} Another baseline could be using a low-resolution generation and applying a super-resolution model over it. Table~\ref{tab:superres} shows that even with a pretrained LDM super-res model trained for 3$\times$ more iterations, and on a large dataset, \oursshort~has better performance. Adding multiple stages normally leads to distribution shifts as well as more inference steps coming from different stages. 

\noindent \textbf{PixelFlow~\cite{chen2025pixelflow}, DFM~\cite{haji2025decomposable}.} These are flow-based DiT models with differential equation solver-based sampling, and hence, are hard to compare fairly against. Nonetheless, in a fair setting in section~\ref{sec:our_arch_validity}, we recreate a multi-res pixel diffusion similar to PixelFlow, and show that using ~\ourmodel{} formulation outperforms it (Table~\ref{tab:flexi_unet_backtracking}).

\begin{table}[t]
    \begin{minipage}{.56\linewidth}
    \caption{FID comparison of SSD and super-resolved DDPM (64 res.) using an OpenImages pretrained 4$\times$ LDM.  }
      \vspace{-1em}
\label{tab:superres}
      \centering
      \resizebox{0.85\textwidth}{!}{
      \setlength\tabcolsep{4pt}
      \begin{tabular}{@{}lc@{}} 
        \toprule
        Method & FID\\
        \midrule
        SSD (3L, res. 256) & 7.79 \\
        low-res diffusion (res. 64) + super-res (4$\times$) & 7.91\\
        \bottomrule
    \end{tabular}
    }
    \end{minipage} 
    \hfill
    \begin{minipage}{.01\linewidth}
    \end{minipage}
    \begin{minipage}{.41\linewidth}
    \caption{Inference time per batch: SSD and LDMs (1000 steps, bs=32, A4000).}
      \vspace{-1em}
\label{tab:LDM}
      \centering
      \resizebox{\textwidth}{!}{
      \setlength\tabcolsep{4pt}
      \begin{tabular}{@{}lc@{}} 
        \toprule
        Method & Inference time (secs)\\
        \midrule
        SSD (6L, res. 256) & 495 \\
        LDM  (res. 256) & 515\\
        \bottomrule
    \end{tabular}
    }
    \end{minipage}%
\end{table}

\begin{table}[t]
    \begin{minipage}{\linewidth}
      \caption{Comparison with UDPM at 64 resolution: FID, training (1 H100), and inference speed (1 A4000, bs=256)}
      \vspace{-0.5em}
\label{tab:udpm}
      \centering
      \resizebox{0.9\textwidth}{!}{
      \setlength\tabcolsep{4pt}
        \begin{tabular}{@{}lcccc@{}} 
        \toprule
         Method & Inference  & Inference  & Training Time  & FID   \\
         & steps & Time / batch & (250K iters) & (250K iters)\\
         &  & (in secs) & (in hours) & \\
        \midrule
        DDPM-$\epsilon$ & 1000 & 1018.07 & 17.575 & 2.36 \\
        SSD (2L) & 1000 & 898.71 & 15.658 & 2.68 \\
        SSD (4L) & 1000 & 672.09 & 13.095 & 4.1 \\
        \midrule
        UDPM & 3 &  1.88  & 30.58& 7.51 \\
        UDMP (w/o Adv. \& Perceptual loss) & 3 & 1.84 & 31.63 & 98.61 \\
        
        \bottomrule
    \end{tabular}
    }
    \end{minipage}
\end{table} 
\begin{figure}[!t]
    \centering
    \vspace{-0.3em}
\includegraphics[width=0.9\linewidth]{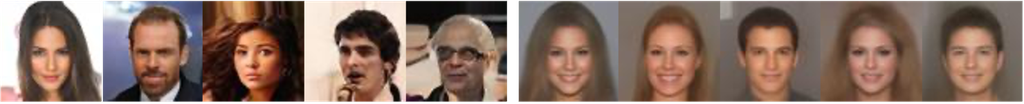}
    \vspace{-.5em}
    \caption{\footnotesize  UDPM generations w/ (left) and w/o (right) adversarial and perceptual losses.}
    \label{fig:udpm}
    \vspace{-2em}
\end{figure}

\subsection{Quantitative Results}
\label{sec:quant_results}

\noindent \textbf{Number of Inference Steps.} In Table \ref{tab:infer_time}, we compare inference speed across different samplers and denoising steps. We report DDPM sampling with the default 1000 steps, a reduced 250 step process, and DDIM with 25 steps. For our method, we report results with 1000 and 250 DDPM steps, since \oursshort{} is formulated in the DDPM setting. We observe that reducing the number of diffusion steps leads to a much larger performance degradation for DDPM-$\epsilon$ and DDPM-$x_0$ than for our approach. This aligns with prior observations that DDPM-$\epsilon$ models trained with the $L_{\text{simple}}$ loss (with fixed sigmas) deteriorate substantially when the number of sampling steps is reduced~\cite{nichol2021improved}, which is reflected in our results as well.

We note that \oursshort{} degrades far less when reducing the sampling steps to 250, while also providing substantial inference speedups. However, to ensure a fair comparison against baselines, we report all final quantitative results in the paper using the standard 1000-step setting. The speedup column in Table~\ref{tab:infer_time} reports the speedup obtained in generating a batch of 256 samples relative to the time taken by DDPM-$x_0$.

\noindent \textbf{Lanczos sampling overhead.} Table~\ref{tab:lanczos} shows that the overhead of using Lanczos instead of torch.randn call is negligible, since it is applied only in the resolution-changing steps (1${\times}$ in \oursshort~(2L),  2${\times}$ in \oursshort~(3L)). Refer to Table~\ref{tab:infer_time} for comparison against DDPM. 

\subsection{Qualitative Results}

We show qualitative results of \oursshort~on every setting that we have trained and noted in Tables~\ref{tab:celeba_main_table}, and \ref{tab:imagenet64_main_table}. For every setting, we show the progression of noisy states $x_t$ and predicted clean images $x_{0,\theta}^{r(t-1)}$ during generation, and a grid of generated images. The results start on the next page.

\begin{figure*}[h!]
    \centering
    \includegraphics[width=\linewidth]{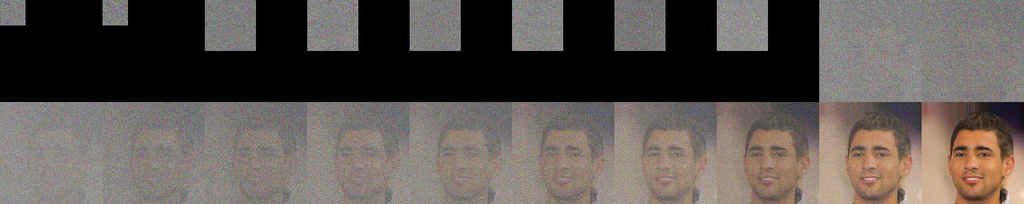}
    \caption{Progression of noisy states $x_t$ during generation using \oursshort~(\ourmodel{}, 3L) on CelebA-256.}
    \label{fig:progression_celeba_256_3L_xt}
\end{figure*}
\begin{figure*}[h!]
    \centering
    \includegraphics[width=\linewidth]{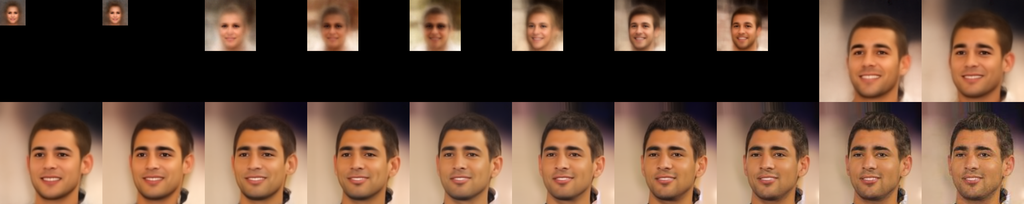}
    \caption{Progression of predicted clean images $x_{0,\theta}^{r(t-1)}$ during generation using \oursshort~(\ourmodel{}, 3L) on CelebA-256.}
    \label{fig:progression_celeba_256_3L_x0}
\end{figure*}
\begin{figure*}[h!]
    \centering
    \includegraphics[width=\linewidth]{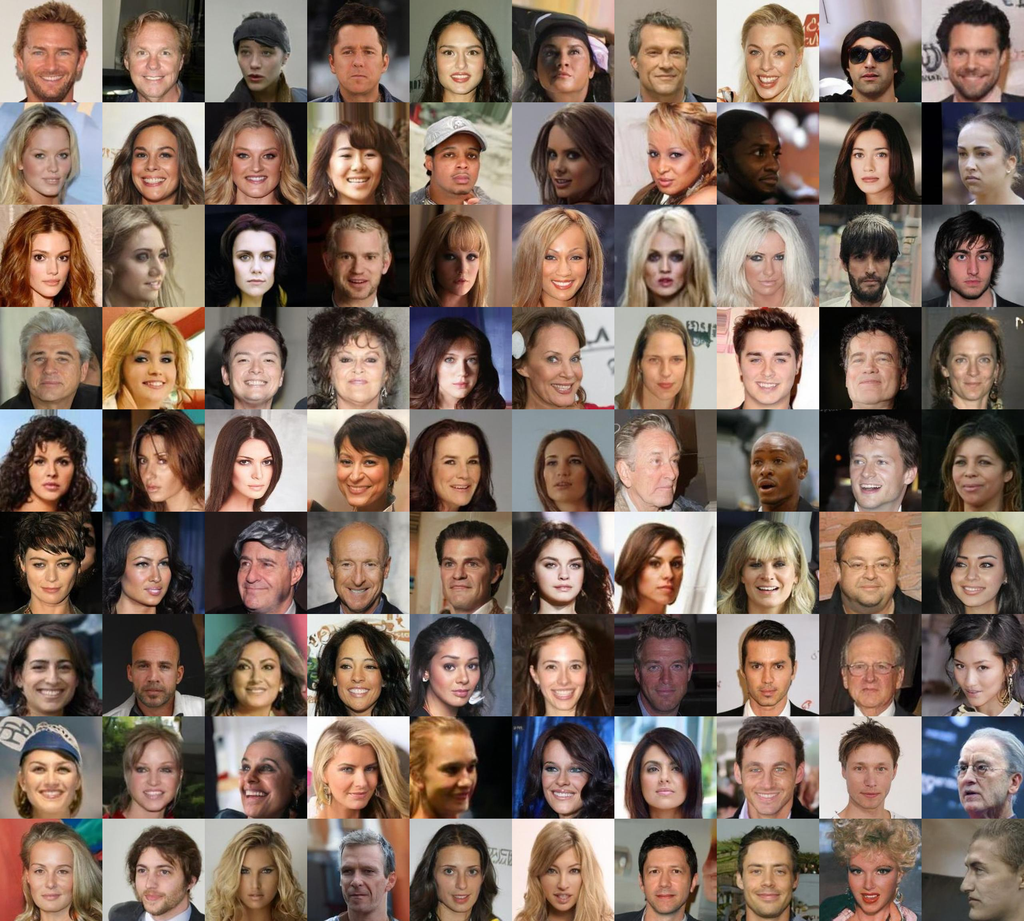}
    \caption{Generated Samples using \oursshort~(\ourmodel{}, 3L) on CelebA-256.}
    \label{fig:gen_grid_celeba_256_3L}
\end{figure*}

\begin{figure*}[h!]
    \centering
    \includegraphics[width=\linewidth]{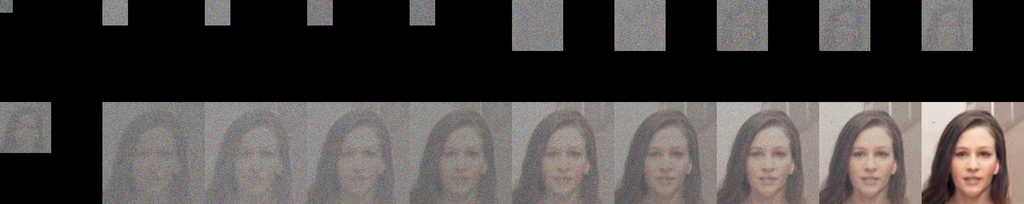}
    \caption{Progression of noisy states $x_t$ during generation using \oursshort~(\ourmodel{}, 4L) on CelebA-256.}
    \label{fig:progression_celeba_256_4L_xt}
\end{figure*}
\begin{figure*}[h!]
    \centering
    \includegraphics[width=\linewidth]{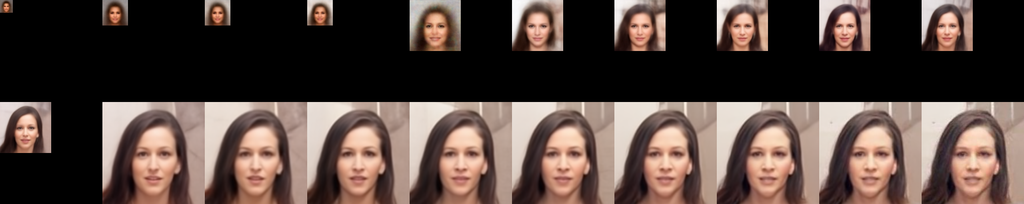}
    \caption{Progression of predicted clean images $x_{0,\theta}^{r(t-1)}$ during generation using \oursshort~(\ourmodel{}, 4L) on CelebA-256.}
    \label{fig:progression_celeba_256_4L_x0}
\end{figure*}
\begin{figure*}[h!]
    \centering
    \includegraphics[width=\linewidth]{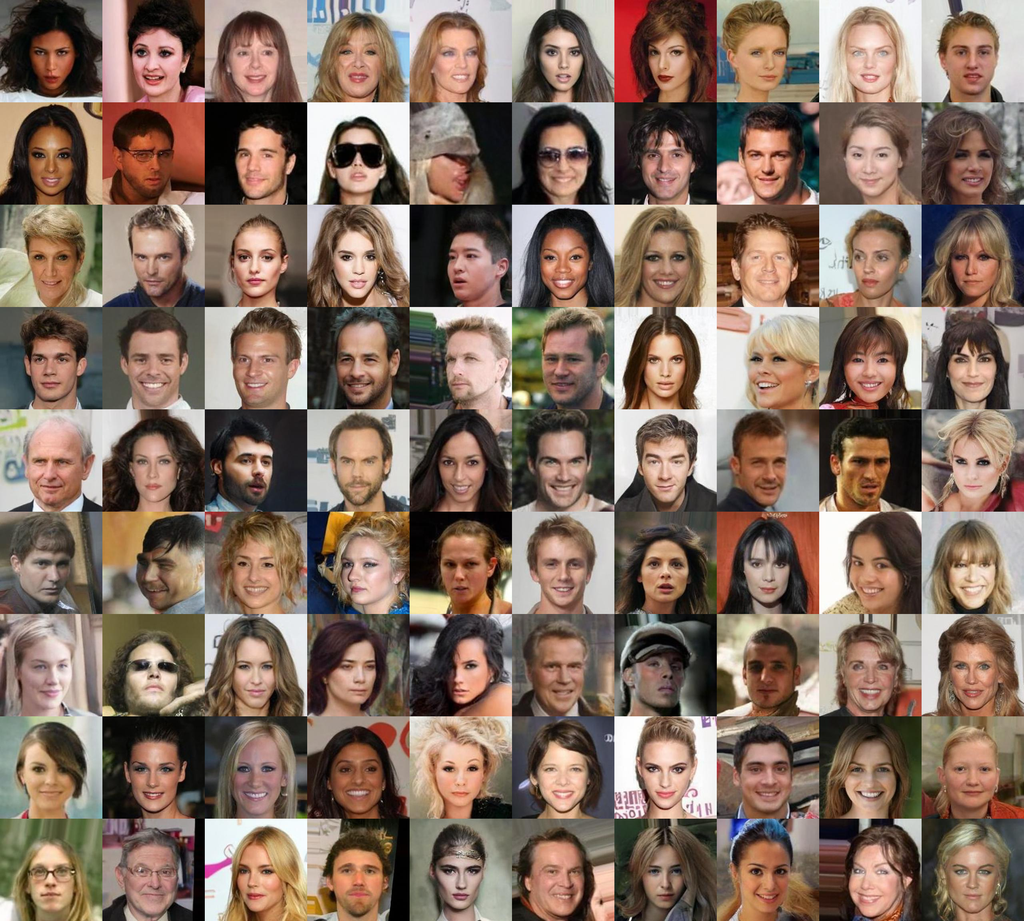}
    \caption{Generated Samples using \oursshort~(\ourmodel{}, 4L) on CelebA-256.}
    \label{fig:gen_grid_celeba_256_4L}
\end{figure*}

\begin{figure*}[h!]
    \centering
    \includegraphics[width=\linewidth]{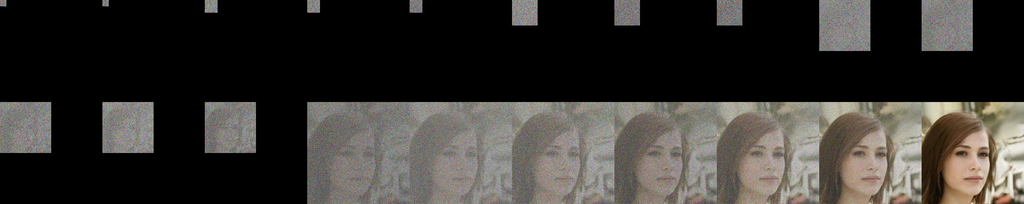}
    \caption{Progression of noisy states $x_t$ during generation using \oursshort~(\ourmodel{}, 6L) on CelebA-256.}
    \label{fig:progression_celeba_256_6L_xt}
\end{figure*}
\begin{figure*}[h!]
    \centering
    \includegraphics[width=\linewidth]{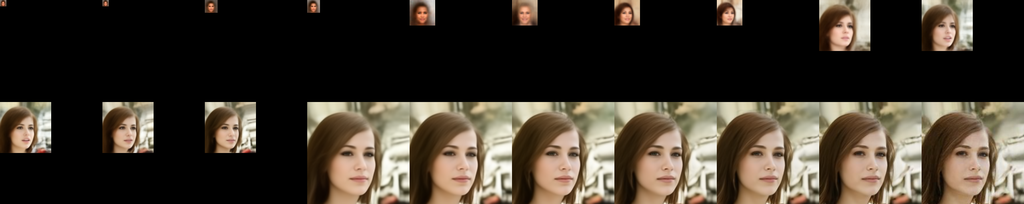}
    \caption{Progression of predicted clean images $x_{0,\theta}^{r(t-1)}$ during generation using \oursshort~(\ourmodel{}, 6L) on CelebA-256.}
    \label{fig:progression_celeba_256_6L_x0}
\end{figure*}
\begin{figure*}[h!]
    \centering
    \includegraphics[width=\linewidth]{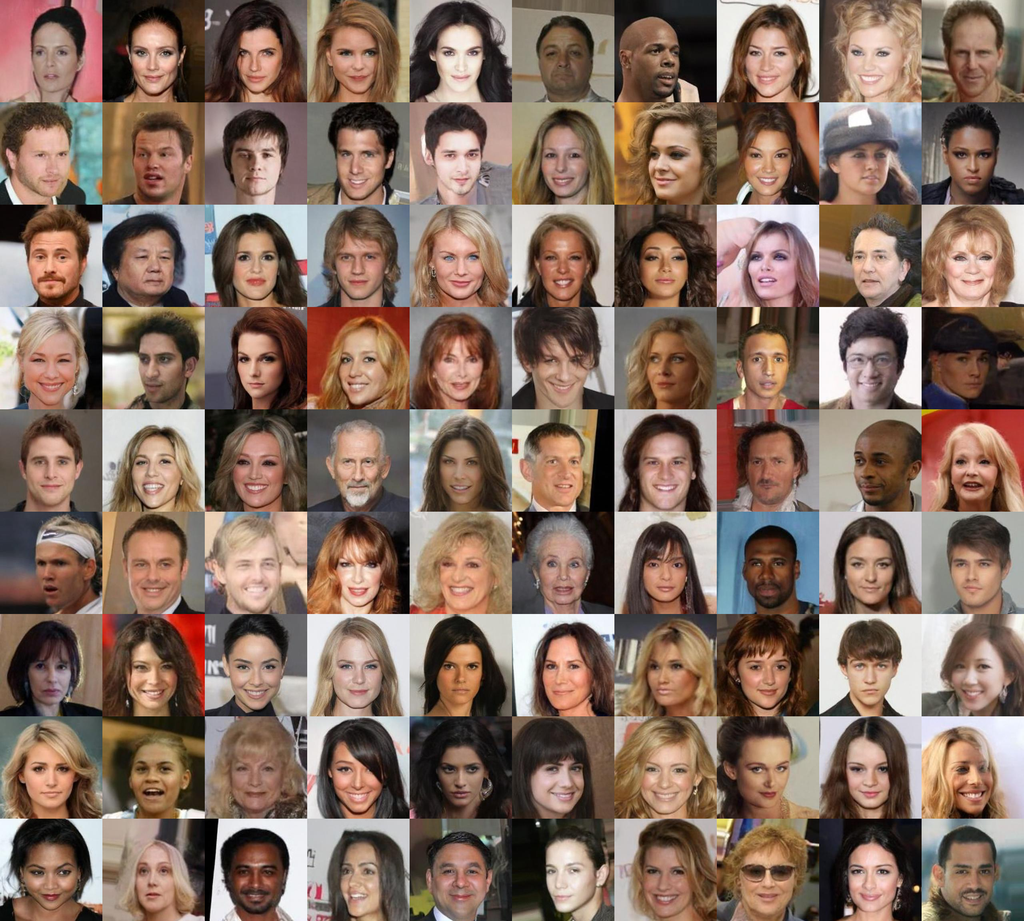}
    \caption{Generated Samples using \oursshort~(\ourmodel{}, 6L) on CelebA-256.}
    \label{fig:gen_grid_celeba_256_6L}
\end{figure*}

\begin{figure*}[h!]
    \centering
    \includegraphics[width=\linewidth]{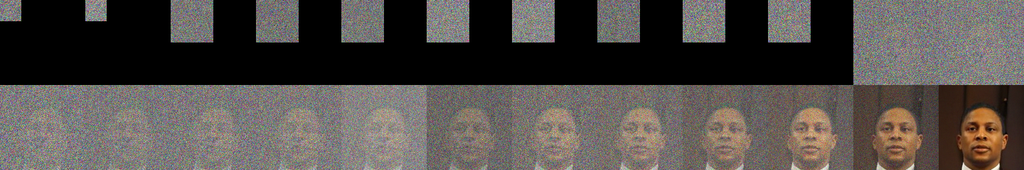}
    \caption{Progression of noisy states $x_t$ during generation using \oursshort~(\ourmodel{}, 3L) on CelebA-128.}
    \label{fig:progression_celeba_128_3L_xt}
\end{figure*}
\begin{figure*}[h!]
    \centering
    \includegraphics[width=\linewidth]{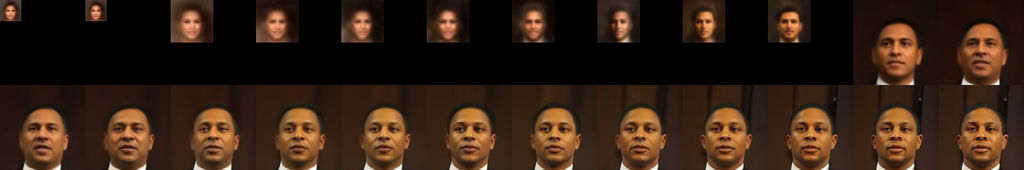}
    \caption{Progression of predicted clean images $x_{0,\theta}^{r(t-1)}$ during generation using \oursshort~(\ourmodel{}, 3L) on CelebA-128.}
    \label{fig:progression_celeba_128_3L_x0}
\end{figure*}
\begin{figure*}[h!]
    \centering
    \includegraphics[width=\linewidth]{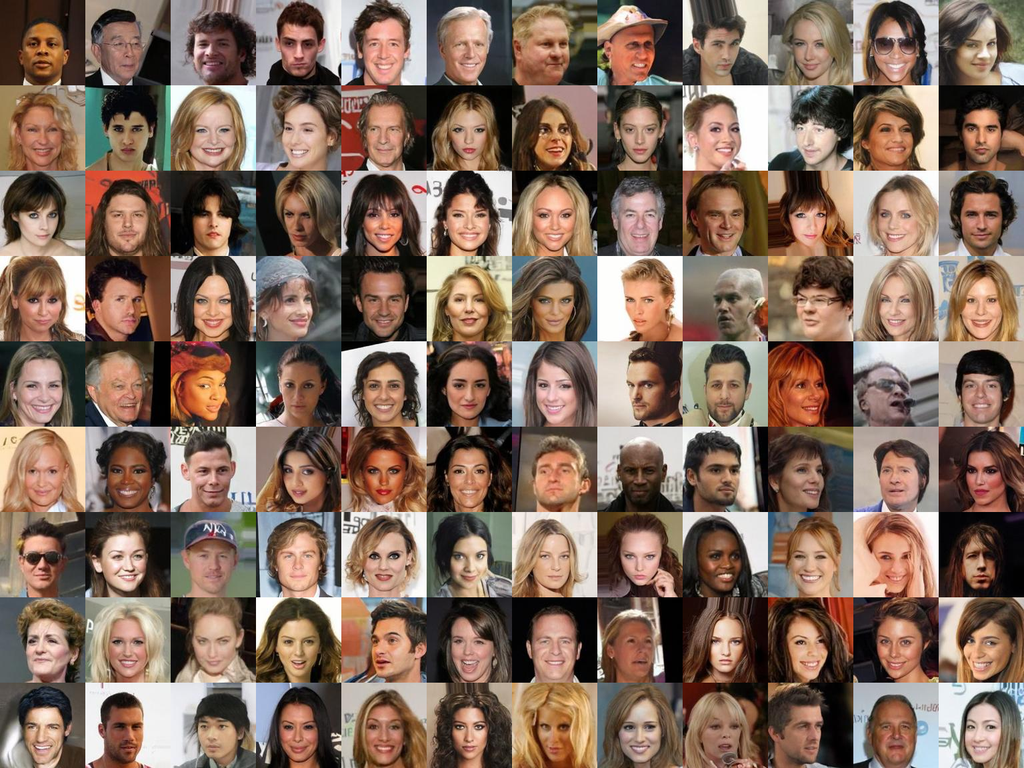}
    \caption{Generated Samples using \oursshort~(\ourmodel{}, 3L) on CelebA-128.}
    \label{fig:gen_grid_celeba_128_3L}
\end{figure*}

\begin{figure*}[h!]
    \centering
    \includegraphics[width=\linewidth]{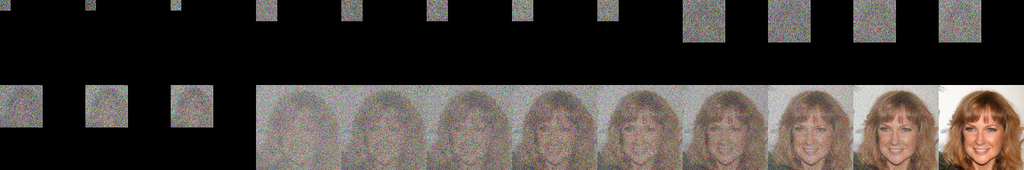}
    \caption{Progression of noisy states $x_t$ during generation using \oursshort~(\ourmodel{}, 5L) on CelebA-128.}
    \label{fig:progression_celeba_128_5L_xt}
\end{figure*}
\begin{figure*}[h!]
    \centering
    \includegraphics[width=\linewidth]{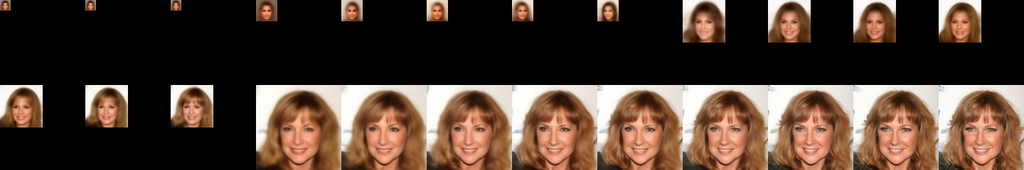}
    \caption{Progression of predicted clean images $x_{0,\theta}^{r(t-1)}$ during generation using \oursshort~(\ourmodel{}, 5L) on CelebA-128.}
    \label{fig:progression_celeba_128_5L_x0}
\end{figure*}
\begin{figure*}[h!]
    \centering
    \includegraphics[width=\linewidth]{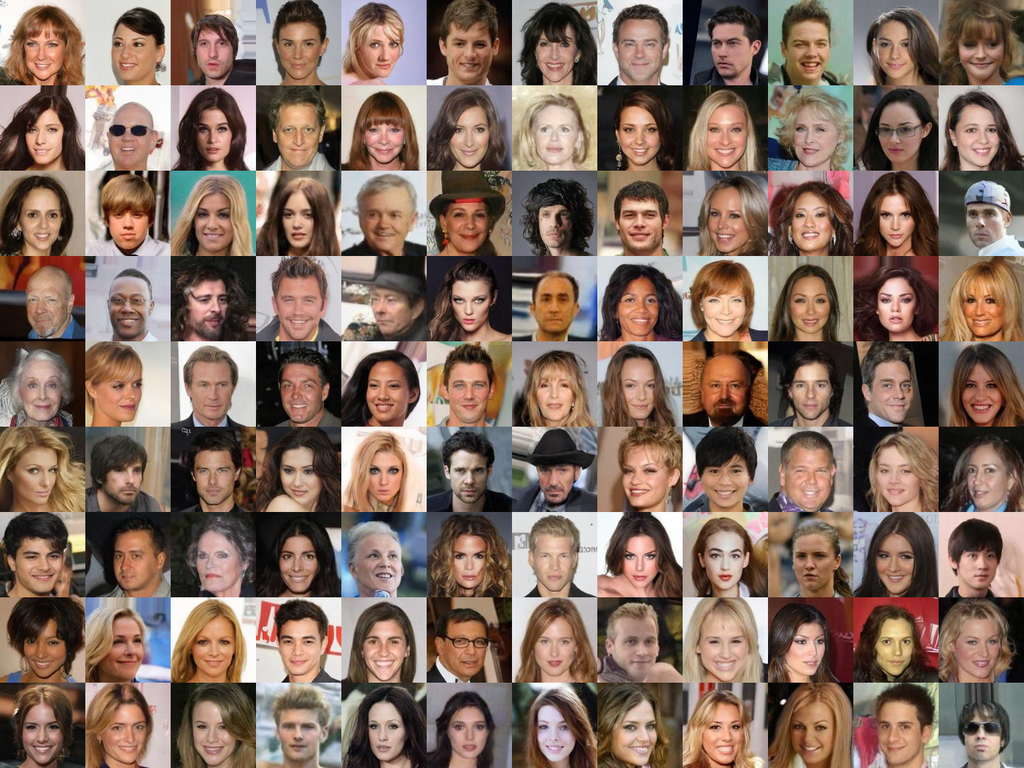}
    \caption{Generated Samples using \oursshort~(\ourmodel{}, 5L) on CelebA-128.}
    \label{fig:gen_grid_celeba_128_5L}
\end{figure*}

\begin{figure*}[h!]
    \centering
    \includegraphics[width=\linewidth]{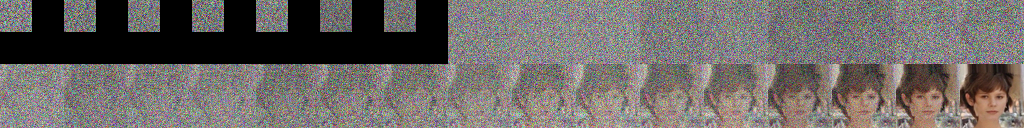}
    \caption{Progression of noisy states $x_t$ during generation using \oursshort~(\ourmodel{}, 2L) on CelebA-64.}
    \label{fig:progression_celeba_64_2L_xt}
\end{figure*}
\begin{figure*}[h!]
    \centering
    \includegraphics[width=\linewidth]{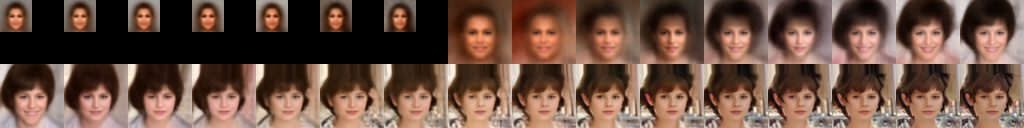}
    \caption{Progression of predicted clean images $x_{0,\theta}^{r(t-1)}$ during generation using \oursshort~(\ourmodel{}, 2L) on CelebA-64.}
    \label{fig:progression_celeba_64_2L_x0}
\end{figure*}
\begin{figure*}[h!]
    \centering
    \includegraphics[width=\linewidth]{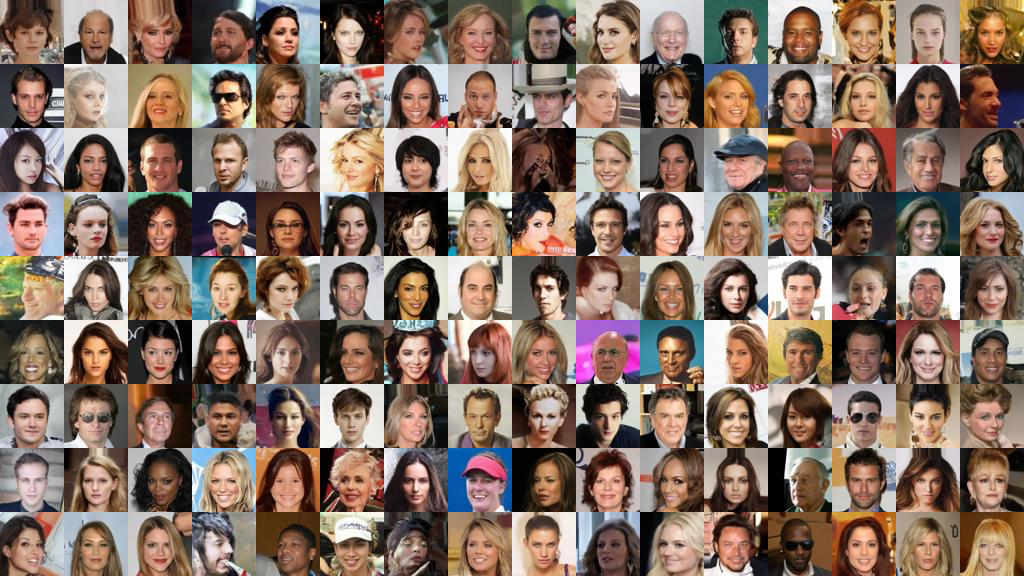}
    \caption{Generated Samples using \oursshort~(\ourmodel{}, 2L) on CelebA-64.}
    \label{fig:gen_grid_celeba_64_2L}
\end{figure*}

\begin{figure*}[h!]
    \centering
    \includegraphics[width=\linewidth]{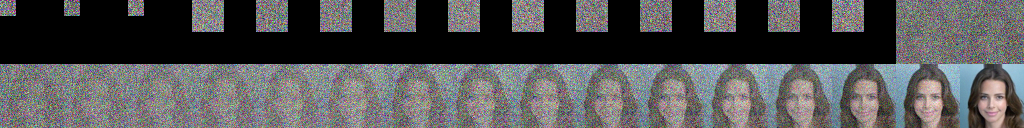}
    \caption{Progression of noisy states $x_t$ during generation using \oursshort~(\ourmodel{}, 3L) on CelebA-64.}
    \label{fig:progression_celeba_64_3L_xt}
\end{figure*}
\begin{figure*}[h!]
    \centering
    \includegraphics[width=\linewidth]{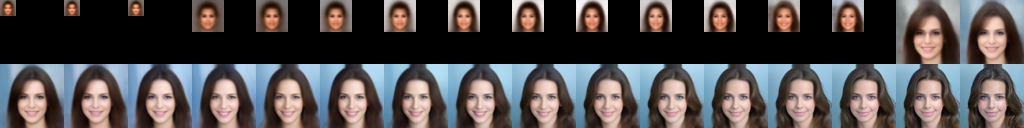}
    \caption{Progression of predicted clean images $x_{0,\theta}^{r(t-1)}$ during generation using \oursshort~(\ourmodel{}, 3L) on CelebA-64.}
    \label{fig:progression_celeba_64_3L_x0}
\end{figure*}
\begin{figure*}[h!]
    \centering
    \includegraphics[width=\linewidth]{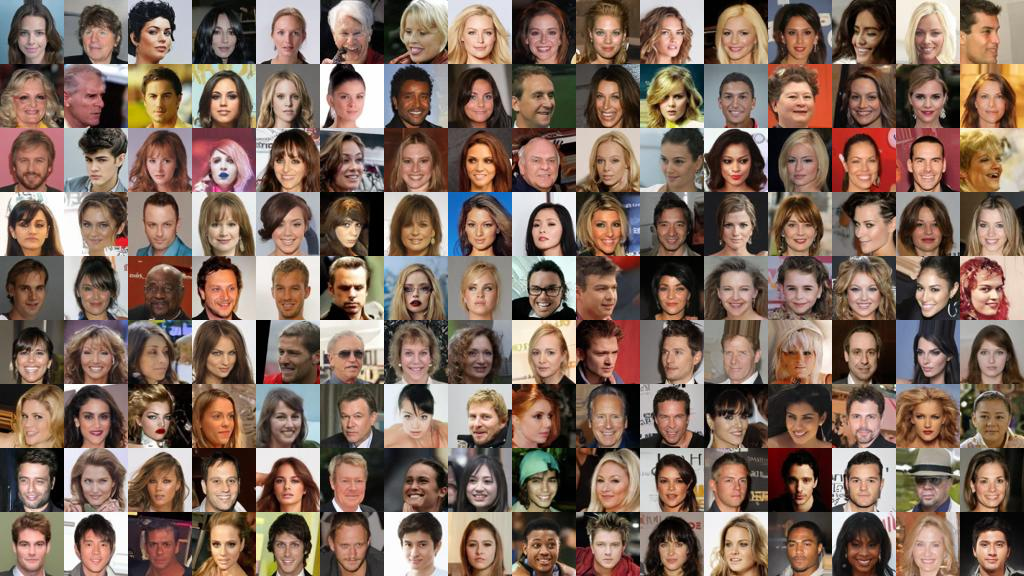}
    \caption{Generated Samples using \oursshort~(\ourmodel{}, 3L) on CelebA-64.}
    \label{fig:gen_grid_celeba_64_3L}
\end{figure*}

\begin{figure*}[h!]
    \centering
    \includegraphics[width=\linewidth]{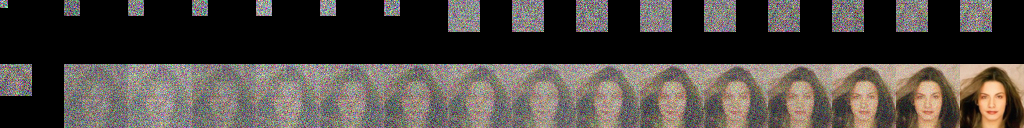}
    \caption{Progression of noisy states $x_t$ during generation using \oursshort~(\ourmodel{}, 4L) on CelebA-64.}
    \label{fig:progression_celeba_64_4L_xt}
\end{figure*}
\begin{figure*}[h!]
    \centering
    \includegraphics[width=\linewidth]{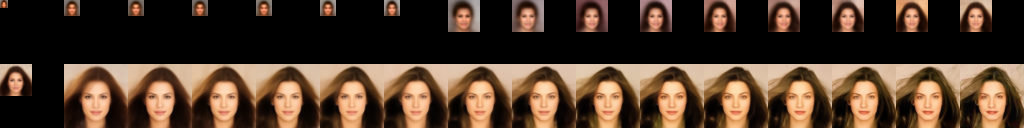}
    \caption{Progression of predicted clean images $x_{0,\theta}^{r(t-1)}$ during generation using \oursshort~(\ourmodel{}, 4L) on CelebA-64.}
    \label{fig:progression_celeba_64_4L_x0}
\end{figure*}
\begin{figure*}[h!]
    \centering
    \includegraphics[width=\linewidth]{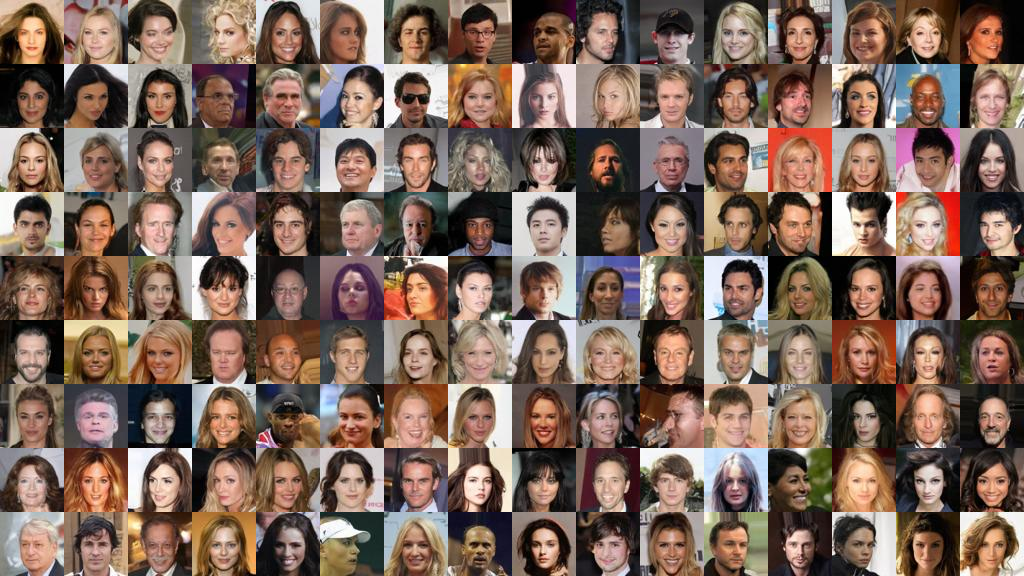}
    \caption{Generated Samples using \oursshort~(\ourmodel{}, 4L) on CelebA-64.}
    \label{fig:gen_grid_celeba_64_4L}
\end{figure*}

\begin{figure*}[h!]
    \centering
    \includegraphics[width=\linewidth]{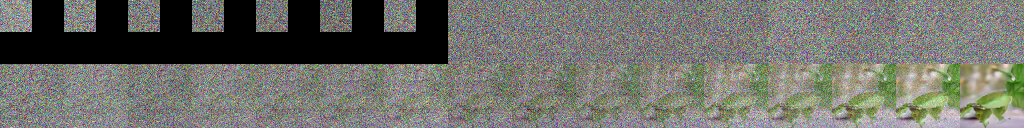}
    \includegraphics[width=\linewidth]{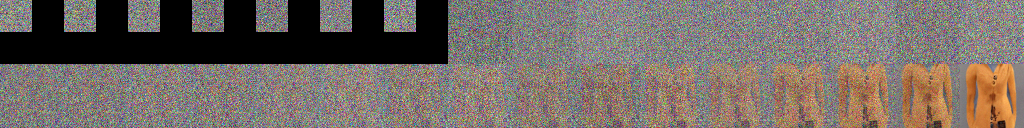}
    \includegraphics[width=\linewidth]{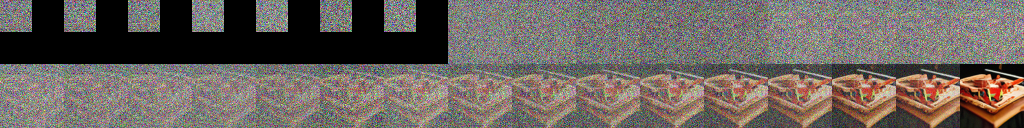}
    \caption{Progression of noisy states $x_t$ during generation using \oursshort~(\ourmodel{}, 2L) on ImageNet-64. Here we show the progression of 3 samples; each pair of rows corresponds to a single sample.}
    \label{fig:progression_imagenet_64_2L_xt}
\end{figure*}
\begin{figure*}[h!]
    \centering
    \includegraphics[width=\linewidth]{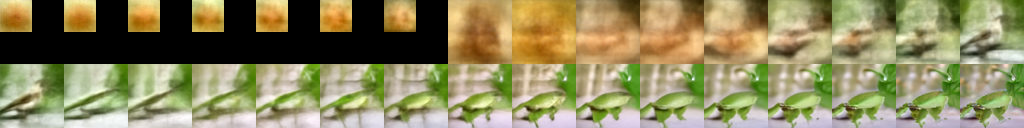}
    \includegraphics[width=\linewidth]{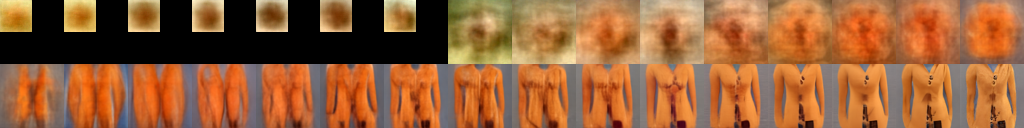}
    \includegraphics[width=\linewidth]{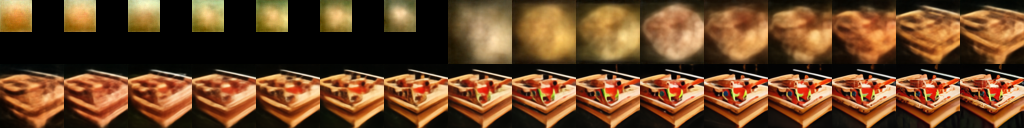}
    \caption{Progression of predicted clean images $x_{0,\theta}^{r(t-1)}$ during generation using \oursshort~(\ourmodel{}, 2L) on ImageNet-64. Here we show the progression of 3 samples; each pair of rows corresponds to a single sample.}
    \label{fig:progression_imagenet_64_2L_x0}
\end{figure*}

\begin{figure*}[h!]
    \centering
    \includegraphics[width=\linewidth]{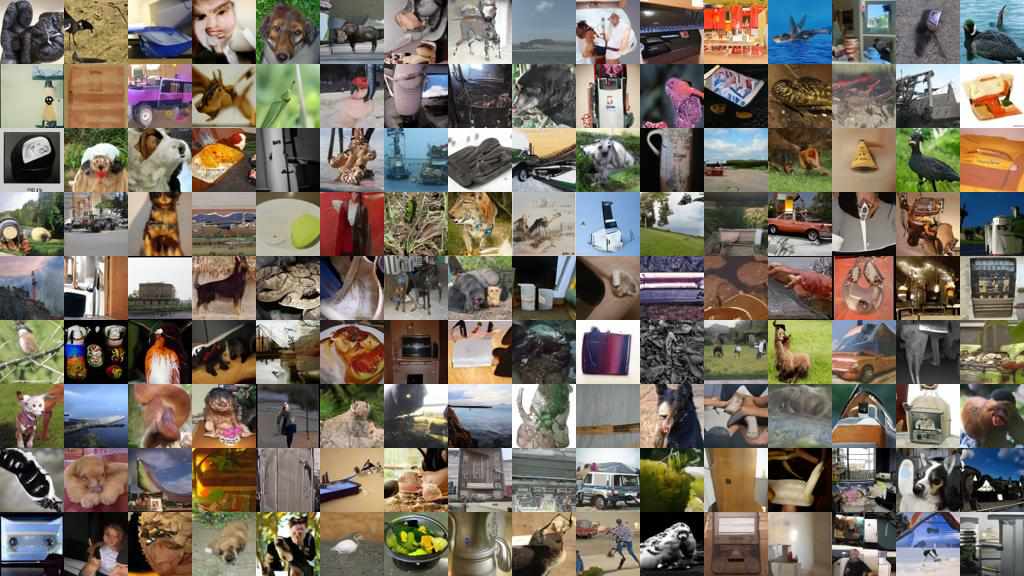}
    \caption{Generated Samples using \oursshort~(\ourmodel{}, 2L) on ImageNet-64.}
    \label{fig:gen_grid_imagenet_64_2L}
\end{figure*}

\begin{figure*}[h!]
    \centering
    \includegraphics[width=\linewidth]{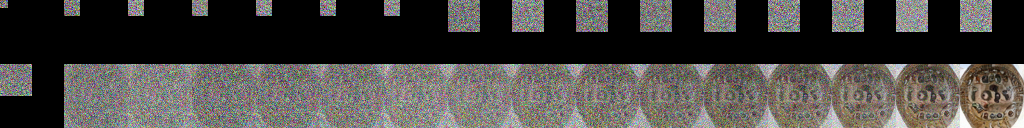}
    \includegraphics[width=\linewidth]{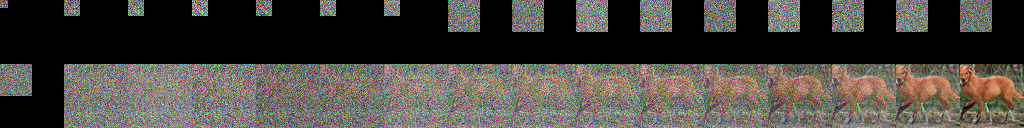}
    \includegraphics[width=\linewidth]{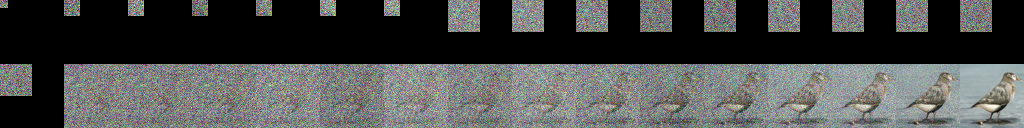}
    \caption{Progression of noisy states $x_t$ during generation using \oursshort~(\ourmodel{}, 4L) on ImageNet-64. Here we show the progression of 3 samples; each pair of rows corresponds to a single sample.}
    \label{fig:progression_imagenet_64_4L_xt}
\end{figure*}
\begin{figure*}[h!]
    \centering
    \includegraphics[width=\linewidth]{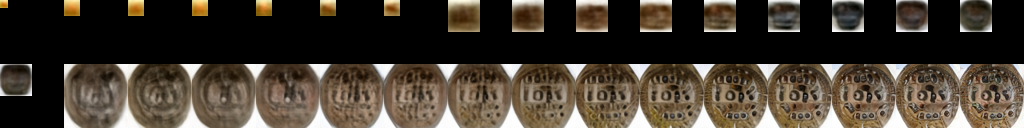}
    \includegraphics[width=\linewidth]{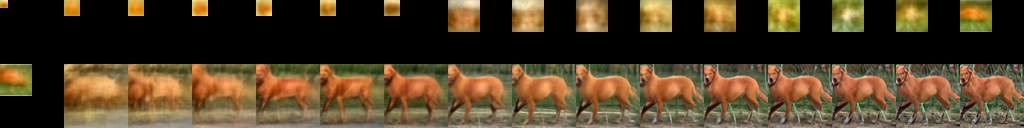}
    \includegraphics[width=\linewidth]{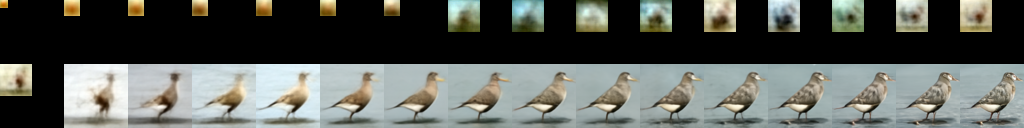}
    \caption{Progression of predicted clean images $x_{0,\theta}^{r(t-1)}$ during generation using \oursshort~(\ourmodel{}, 4L) on ImageNet-64. Here we show the progression of 3 samples; each pair of rows corresponds to a single sample.}
    \label{fig:progression_imagenet_64_4L_x0}
\end{figure*}

\begin{figure*}[h!]
    \centering
    \includegraphics[width=\linewidth]{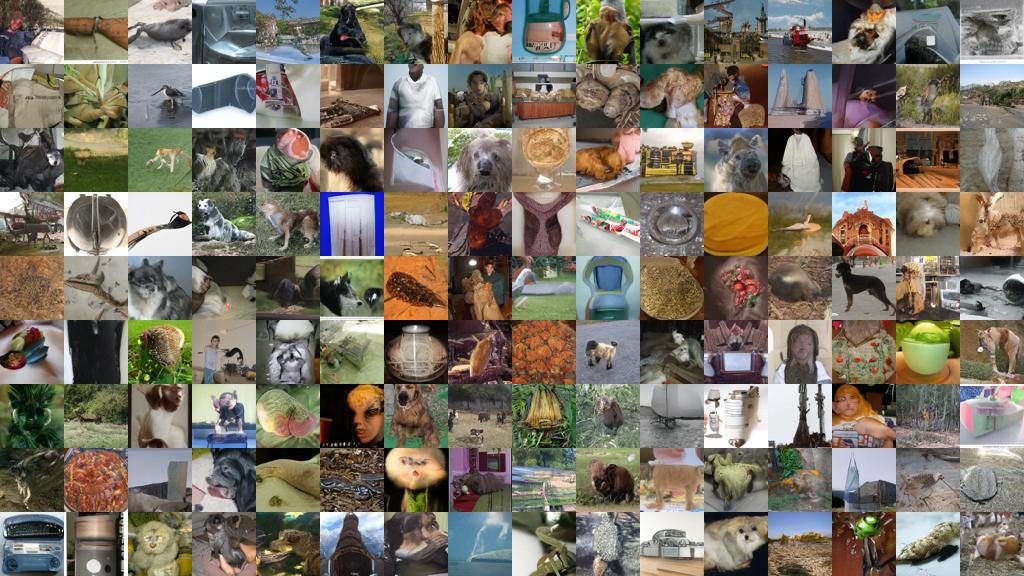}
    \caption{Generated Samples using \oursshort~(\ourmodel{}, 4L) on ImageNet-64.}
    \label{fig:gen_grid_imagenet_64_4L}
\end{figure*}

\onecolumn

\section{Mathematical Derivations}

In this section, we provide derivations for various mathematical results provided in the main paper.
\subsection{Forward Transition}
\begin{theorem}[Forward Transition]
\label{thm:forward_covariance_linear_diffusion}
Let a generalized linear diffusion process be defined by
\begin{align}
x_t = M_t x_{t-1} + \eta_t,
\qquad
\eta_t \sim \mathcal{N}(0,\Sigma_{t|t-1}),
\end{align}
and suppose the marginal distribution satisfies
\begin{align}
q(x_t \mid x_0) = \mathcal{N}(\mu_t, \Sigma_t).
\end{align}
Then the transition mean and covariance are given by
\begin{align}
\mu_t&=M_{1:t}x_0 \\
\Sigma_{t|t-1}
&=
\Sigma_t - M_t \Sigma_{t-1} M_t^T.
\end{align}
\end{theorem}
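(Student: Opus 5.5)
We argue by induction on $t$, unrolling the recursion $x_t = M_t x_{t-1} + \eta_t$ from the fixed starting point $x_0$. The modeling assumption we rely on is that the noises $\eta_1,\dots,\eta_t$ are mutually independent and independent of $x_0$. This is the Markov structure of the forward chain: $\eta_t$ is independent of $x_{t-1}$ given $x_0$.

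\textbf{Mean.} For $t=0$ we have $\mu_0 = x_0 = M_{1:0}x_0$, using the convention that the empty product is the identity. Suppose $\mathbb{E}[x_{t-1}\mid x_0] = M_{1:t-1}x_0$. Taking conditional expectations in the recursion and using $\mathbb{E}[\eta_t]=0$ gives
\begin{align*}
\mu_t = M_t \mu_{t-1} = M_t M_{1:t-1} x_0 = M_{1:t}x_0 .
\end{align*}
This also justifies writing $M_t = M_{1:t}(M_{1:t-1})^{-1}$ when the inverse exists, as in Table~\ref{tab:linear_operator_diffusion}. The mean identity itself does not need that inverse.

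\textbf{Covariance.} Conditioned on $x_0$, the variable $x_{t-1}$ is Gaussian with covariance $\Sigma_{t-1}$, and $\eta_t$ is independent of it. Hence $x_t$ is Gaussian, being an affine image of independent Gaussians. Its covariance is
\begin{align*}
\mathrm{Cov}(x_t\mid x_0) = M_t\,\mathrm{Cov}(x_{t-1}\mid x_0)\,M_t^T + \mathrm{Cov}(\eta_t) = M_t\Sigma_{t-1}M_t^T + \Sigma_{t|t-1},
\end{align*}
because the cross terms $\mathbb{E}[M_t(x_{t-1}-\mu_{t-1})\eta_t^T]$ vanish by independence. The hypothesis says this equals $\Sigma_t$. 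Rearranging yields $\Sigma_{t|t-1} = \Sigma_t - M_t\Sigma_{t-1}M_t^T$. This mirrors the scalar relation $\sigma_{t|t-1}^2 = \sigma_t^2 - \text{a}_{t|t-1}^2\sigma_{t-1}^2$ of blurring diffusion.

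\textbf{Main obstacle.} The delicate point is logical, not computational. The statement prescribes the marginals $\Sigma_t$ and asks for the transition covariance, so the argument has to run in reverse: choose $\Sigma_{t|t-1}$ as above and confirm that it produces the desired marginals. For this choice to be a legitimate covariance, it must be positive semidefinite. In the isotropic case this is exactly the feasibility condition stated after Eq.~\ref{eq:forward_step_ddpm_matrix}, namely $\sigma_t^2 \ge \sigma_{t-1}^2\lambda_{\max}(M_tM_t^T)$. We will state it as a standing assumption.

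We also need to note that the dimensions are consistent even when $M_t$ is non-square, for example a downsampling operator. $\Sigma_{t-1}$ lives at resolution $r(t-1)$, and $M_t\Sigma_{t-1}M_t^T$ and $\Sigma_t$ both live at resolution $r(t)$. Nothing in the computation uses squareness or invertibility of $M_t$.
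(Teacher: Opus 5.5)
Your proof is correct and follows the paper's argument. Both write $x_t = M_{1:t}x_0 + M_t\epsilon_{t-1} + \eta_t$ with $\epsilon_{t-1}\sim\mathcal{N}(0,\Sigma_{t-1})$, use independence of $\eta_t$ from $x_{t-1}$ given $x_0$ to get $\mathrm{Cov}(x_t\mid x_0)=M_t\Sigma_{t-1}M_t^T+\Sigma_{t|t-1}$, and equate this with $\Sigma_t$; your induction for the mean only spells out what the paper calls \emph{true by design}, and for the direction the theorem asserts, positive semidefiniteness of $\Sigma_t - M_t\Sigma_{t-1}M_t^T$ is a consequence (it equals the given covariance $\Sigma_{t|t-1}$), so the feasibility condition is needed only for your converse construction, not as a standing assumption.
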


\begin{proof}
The mean part is true by design of the cumulative linear operator. 
Here we derive $\Sigma_{t|t-1}$.

\begin{align}
x_t &= M_t x_{t-1} + \eta_t, 
\quad \eta_t \sim \mathcal{N}(0, \Sigma_{t|t-1}) \nonumber\\
&= M_t (M_{1:t-1} x_0 + \epsilon_{t-1}) + \eta_t, 
\quad \epsilon_{t-1} \sim \mathcal{N}(0, \Sigma_{t-1}) \nonumber\\
&= M_{1:t} x_0 + M_t \epsilon_{t-1} + \eta_t. \nonumber
\end{align}

Hence,
\begin{align}
\mathrm{Cov}(x_t \mid x_0) 
&= \mathrm{Cov}(M_t \epsilon_{t-1} + \eta_t \mid x_0) \nonumber\\
&= \mathrm{Cov}(M_t \epsilon_{t-1} \mid x_0) 
  + \mathrm{Cov}(\eta_t) 
  \quad (\text{independence}) \nonumber\\
&= M_t \Sigma_{t-1} M_t^T + \Sigma_{t|t-1}. \nonumber \\
\Sigma_t &= M_t \Sigma_{t-1} M_t^T + \Sigma_{t|t-1}, \nonumber\\
\implies \Sigma_{t|t-1} &= \Sigma_t - M_t \Sigma_{t-1} M_t^T. \nonumber
\end{align}
\end{proof}

\subsection{Posterior Distribution}
\label{sec:derivation_reverse_step_ddpm_matrix}
\begin{theorem}[Posterior Distribution]
\label{thm:posterior_linear_diffusion}
Consider the linear generalized linear diffusion process
\begin{align}
x_t = M_t x_{t-1} + \eta_t,
\qquad
\eta_t \sim \mathcal{N}(0,\Sigma_{t|t-1}),
\end{align}
with marginals
\begin{align}
q(x_{t-1}\mid x_0)
&=
\mathcal{N}(\mu_{t-1},\Sigma_{t-1}), \\
q(x_t\mid x_0)
&=
\mathcal{N}(\mu_t,\Sigma_t).
\end{align}
Then the posterior distribution
\begin{align}
q(x_{t-1} \mid x_t, x_0)
\end{align}
is Gaussian:
\begin{align}
q(x_{t-1} \mid x_t, x_0)
=
\mathcal{N}(\mu_{t\rightarrow t-1}, \Sigma_{t\rightarrow t-1}),
\end{align}
with
\begin{align}
\Sigma_{t\rightarrow t-1}
&=
(\Sigma_{t-1}^{-1}
+
M_t^T\Sigma_{t|t-1}^{-1}M_t)^{-1}, \\
\mu_{t\rightarrow t-1}
&=
\Sigma_{t\rightarrow t-1}
\left(
\Sigma_{t-1}^{-1}\mu_{t-1}
+
M_t^T\Sigma_{t|t-1}^{-1}x_t
\right).
\end{align}
\end{theorem}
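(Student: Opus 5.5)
We prove the statement by Bayes' rule combined with completing the square in the exponent. The key structural fact is the Markov property of the forward chain: given $x_{t-1}$, the state $x_t$ depends only on $x_{t-1}$ through $x_t = M_t x_{t-1} + \eta_t$, with $\eta_t$ independent of everything before. Hence
\begin{align}
q(x_{t-1}\mid x_t,x_0) \propto q(x_t\mid x_{t-1})\,q(x_{t-1}\mid x_0), \nonumber
\end{align}
where the proportionality constant $q(x_t\mid x_0)^{-1}$ does not depend on $x_{t-1}$. We substitute the two Gaussian densities, $q(x_t\mid x_{t-1})=\mathcal{N}(M_t x_{t-1},\Sigma_{t|t-1})$ and $q(x_{t-1}\mid x_0)=\mathcal{N}(\mu_{t-1},\Sigma_{t-1})$. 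Throughout, we assume $\Sigma_{t-1}$ and $\Sigma_{t|t-1}$ are positive definite, which is the strict form of the feasibility condition discussed after Theorem~\ref{thm:forward_covariance_linear_diffusion}.

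Next we collect the exponent as a function of $x_{t-1}$ only:
\begin{align}
-\tfrac12\Big[(x_t-M_tx_{t-1})^T\Sigma_{t|t-1}^{-1}(x_t-M_tx_{t-1}) + (x_{t-1}-\mu_{t-1})^T\Sigma_{t-1}^{-1}(x_{t-1}-\mu_{t-1})\Big]. \nonumber
\end{align}
Expanding, the quadratic term in $x_{t-1}$ is $-\tfrac12 x_{t-1}^T(\Sigma_{t-1}^{-1}+M_t^T\Sigma_{t|t-1}^{-1}M_t)x_{t-1}$. The linear term is $x_{t-1}^T(\Sigma_{t-1}^{-1}\mu_{t-1}+M_t^T\Sigma_{t|t-1}^{-1}x_t)$, obtained using the symmetry of the covariance inverses. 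All remaining terms are constant in $x_{t-1}$. Note that $M_t$ may be rectangular (resizing changes dimension), but this causes no issue: $M_t^T\Sigma_{t|t-1}^{-1}M_t$ is square and lives in the dimension of $x_{t-1}$.

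Finally we complete the square. Set $P=\Sigma_{t-1}^{-1}+M_t^T\Sigma_{t|t-1}^{-1}M_t$ and $b$ equal to the linear coefficient. Then the exponent equals $-\tfrac12(x_{t-1}-P^{-1}b)^TP(x_{t-1}-P^{-1}b)$ plus a constant. Since $P$ is the sum of a positive definite matrix and a positive semidefinite one, it is positive definite and hence invertible. A density proportional to this exponential is therefore the Gaussian $\mathcal{N}(P^{-1}b,P^{-1})$, which gives exactly $\Sigma_{t\rightarrow t-1}=P^{-1}$ and $\mu_{t\rightarrow t-1}=\Sigma_{t\rightarrow t-1}b$.

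The only step needing care is justifying the Markov factorization and the invertibility of $P$. We handle the former by noting that $\eta_t$ is independent of $(x_{t-1},x_0)$, and the latter by the positive-definiteness argument above. If $\Sigma_{t|t-1}$ were merely semidefinite, the statement would have to be read via pseudo-inverses, so we restrict to the definite case.
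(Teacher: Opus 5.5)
Your proposal is correct and takes the same route as the paper: Bayes' rule with the Markov property, substituting the two Gaussian densities, collecting the quadratic and linear terms in $x_{t-1}$, and completing the square to read off $\Sigma_{t\rightarrow t-1}$ and $\mu_{t\rightarrow t-1}$. Your explicit positive-definiteness assumptions, which make $P$ invertible even when $M_t$ is rectangular and $M_t^T\Sigma_{t|t-1}^{-1}M_t$ is rank-deficient, tighten a point the paper leaves implicit, and you also keep the factor $\tfrac12$ in the exponent that the paper drops.
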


\begin{proof}
\begin{align}
q(x_{t-1}\!\mid x_t,x_0)
&= \frac{q(x_t\!\mid x_{t-1},x_0)\, q(x_{t-1}\!\mid x_0)}{q(x_t\!\mid x_0)} = \frac{q(x_t\!\mid x_{t-1})\, q(x_{t-1}\!\mid x_0)}{q(x_t\!\mid x_0)} \tag{a}\label{eq:markov} \\[4pt]
&\propto
\exp\!\Bigg(
    -\Big[
        (x_t - M_t x_{t-1})^\top \Sigma_{t|t-1}^{-1} (x_t - M_t x_{t-1})
    \Big]  -\Big[
        (x_{t-1}-\mu_{t-1})^\top \Sigma_{t-1}^{-1} (x_{t-1}-\mu_{t-1})
    \Big]  \tag{b}\label{eq:expand} \\
\nonumber
&\hspace{1.3cm}
    +\Big[
        (x_t-\mu_t)^\top \Sigma_{t}^{-1} (x_t-\mu_t)
    \Big]
\Bigg) \\[4pt]
\nonumber
&=
\exp\!\Bigg(
    -\Big[
        x_t^T\Sigma_{t|t-1}^{-1}x_t - (x_t^T\Sigma_{t|t-1}^{-1}M_t\begingroup\color{blue}x_{t-1}\endgroup + (x_t^T\Sigma_{t|t-1}^{-1}M_t\begingroup\color{blue}x_{t-1}\endgroup)^T) + \begingroup\color{red}x_{t-1}^T\endgroup M_t^T\Sigma_{t|t-1}^{-1}M_t\begingroup\color{red}x_{t-1}\endgroup
    \Big]  \\
    \nonumber
&\hspace{1.3cm}
    -\Big[
        \begingroup\color{red}x_{t-1}^T\endgroup\Sigma_{t-1}^{-1}\begingroup\color{red}x_{t-1}\endgroup-(\mu_{t-1}^T\Sigma_{t-1}^{-1}\begingroup\color{blue}x_{t-1}\endgroup+(\mu_{t-1}^T\Sigma_{t-1}^{-1}\begingroup\color{blue}x_{t-1}\endgroup)^T) + \mu_{t-1}^T\Sigma_{t-1}\mu_{t-1}
    \Big]  \\
    \nonumber
&\hspace{1.3cm}
    +C_1(x_0, x_t)
\Bigg) \\[4pt]
\nonumber
&=
\exp\!\Bigg(
    - \begingroup\color{red}x_{t-1}^T(\Sigma_{t-1}^{-1}+M_t^T\Sigma_{t|t-1}^{-1}M_t)x_{t-1}\endgroup
      \\
      \nonumber
&\hspace{1.3cm}
    +\Big[
        \begingroup\color{blue}(x_t^T\Sigma_{t|t-1}^{-1}M_t+\mu_{t-1}^T\Sigma_{t-1}^{-1})x_{t-1} + ((x_t^T\Sigma_{t|t-1}^{-1}M_t+\mu_{t-1}^T\Sigma_{t-1}^{-1})x_{t-1})^T \endgroup
    \Big]  \\
    \nonumber
&\hspace{1.3cm}
    +C_2(x_0, x_t)
\Bigg) \\[4pt]
\nonumber
\end{align}

In Eq.~\ref{eq:markov}, we first use Bayes' rule, and then use the Markov chain assumption. In Eq.~\ref{eq:expand}, we then substitute the marginal (Eq.~\ref{eq:make_noisy_ssd_matrix}) and forward transition (Eq.~\ref{eq:forward_step_ddpm_matrix}) distributions. Then we start collecting the terms quadratic (red) and linear (blue) in $x_{t-1}$.
From the quadratic and linear terms, we can complete the square and hence extract the mean and variance of the posterior normal distribution:
\begin{align}
    \Sigma_{t\rightarrow t-1} &=(\Sigma_{t-1}^{-1}+M_t^T\Sigma_{t|t-1}^{-1}M_t)^{-1} \nonumber\\
    \mu_{t\rightarrow t-1} &= \Sigma_{t\rightarrow t-1}(x_t^T\Sigma_{t|t-1}^{-1}M_t+\mu_{t-1}^T\Sigma_{t-1}^{-1})^T  \nonumber \\
    &=\Sigma_{t\rightarrow t-1}(M_t^T\Sigma_{t|t-1}^{-1}x_t+\Sigma_{t-1}^{-1}\mu_{t-1}) .  \nonumber \\
    \nonumber
\end{align}
The last step comes from the fact that for a symmetric matrix $A$, $(A^{-1})^T=A^{-1}$, and covariance matrices are symmetric.
\end{proof}

\subsection{Posterior Under Isotropic Marginals}

\begin{theorem}[Closed-Form Posterior Under Isotropic Marginals]
\label{thm:isotropic_posterior_simplification}
Assume isotropic marginals
\begin{align}
\Sigma_t = \sigma_t^2 \mathbf{I},
\qquad
\Sigma_{t-1} = \sigma_{t-1}^2 \mathbf{I}.
\end{align}
Then the posterior covariance simplifies to
\begin{align}
\Sigma_{t\rightarrow t-1}
=
\sigma_{t-1}^2\mathbf{I}
-
\frac{\sigma_{t-1}^4}{\sigma_t^2}
M_t^T M_t,
\end{align}
and the posterior mean simplifies to
\begin{align}
\mu_{t\rightarrow t-1}
=
\mu_{t-1}
+
\frac{\sigma_{t-1}^2}{\sigma_t^2}
M_t^T
\left(x_t - M_t \mu_{t-1}\right).
\end{align}
\end{theorem}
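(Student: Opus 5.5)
The plan is to start from Theorem~\ref{thm:posterior_linear_diffusion} and substitute the forward covariance from Theorem~\ref{thm:forward_covariance_linear_diffusion}. With isotropic marginals this gives
\[
\Sigma_{t|t-1}=\sigma_t^2\mathbf{I}-\sigma_{t-1}^2M_tM_t^T .
\]
Throughout I write $\rho:=\sigma_{t-1}^2/\sigma_t^2$ and $C:=\Sigma_{t|t-1}$. I will assume strict feasibility, $\sigma_t^2>\sigma_{t-1}^2\lambda_{\max}(M_tM_t^T)$, so that $C$ is invertible and the expressions in Theorem~\ref{thm:posterior_linear_diffusion} are well defined. The paper only states the non-strict condition $\succeq$, so the boundary case would have to be treated as a limit.

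\textbf{Covariance.} I apply the Woodbury identity in the form
\[
(A^{-1}+M^TC^{-1}M)^{-1}=A-AM^T(C+MAM^T)^{-1}MA,
\]
with $A=\Sigma_{t-1}=\sigma_{t-1}^2\mathbf{I}$ and $M=M_t$. The key observation is that the inner matrix collapses to a scalar multiple of the identity:
\[
C+M_tAM_t^T=\sigma_t^2\mathbf{I}-\sigma_{t-1}^2M_tM_t^T+\sigma_{t-1}^2M_tM_t^T=\sigma_t^2\mathbf{I}.
\]
This is just the isotropic marginal $\Sigma_t$ again, which is exactly how $C$ was constructed. Therefore
\[
\Sigma_{t\rightarrow t-1}=\sigma_{t-1}^2\mathbf{I}-\frac{\sigma_{t-1}^4}{\sigma_t^2}M_t^TM_t .
\]
Nothing in this step requires $M_t$ to be square, so resolution-changing operators are covered as well.

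\textbf{Mean.} I split the posterior mean from Theorem~\ref{thm:posterior_linear_diffusion} into two terms.

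The prior term is
\[
\Sigma_{t\rightarrow t-1}\Sigma_{t-1}^{-1}\mu_{t-1}=(\mathbf{I}-\rho M_t^TM_t)\mu_{t-1}=\mu_{t-1}-\rho M_t^TM_t\mu_{t-1}.
\]

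For the data term I use the push-through identity $(\mathbf{I}-\rho M_t^TM_t)M_t^T=M_t^T(\mathbf{I}-\rho M_tM_t^T)$. I then note that $\mathbf{I}-\rho M_tM_t^T=C/\sigma_t^2$. This gives
\[
\Sigma_{t\rightarrow t-1}M_t^TC^{-1}x_t=\sigma_{t-1}^2M_t^T\frac{C}{\sigma_t^2}C^{-1}x_t=\rho M_t^Tx_t .
\]
Adding the two terms yields $\mu_{t-1}+\rho M_t^T(x_t-M_t\mu_{t-1})$, as claimed.

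\textbf{Expected obstacle.} The algebra itself is routine once one notices that $C+M_t\Sigma_{t-1}M_t^T=\Sigma_t$ and that $\mathbf{I}-\rho M_tM_t^T=C/\sigma_t^2$. The main care point is invertibility of $C$, which is addressed by the strict feasibility assumption stated at the outset.
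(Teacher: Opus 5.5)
Your proof is correct and follows essentially the same route as the paper. For the covariance, both use Woodbury with $A=\Sigma_{t-1}$, where the inner inverse collapses because $\Sigma_{t|t-1}+M_t\Sigma_{t-1}M_t^T=\Sigma_t$; for the data term of the mean, you apply the push-through identity after substituting the isotropic covariance, while the paper applies the equivalent Woodbury corollary $(A+UCV)^{-1}UC=A^{-1}U(C^{-1}+VA^{-1}U)^{-1}$ to the general form, and your explicit strict-feasibility assumption makes precise the invertibility of $\Sigma_{t|t-1}$ that the paper uses without comment.
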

\begin{proof}
\begin{align}
\Sigma_{t\rightarrow t-1} &=(\Sigma_{t-1}^{-1}+M_t^T\Sigma_{t|t-1}^{-1}M_t)^{-1} \nonumber \\
&= \Sigma_{t-1} - \Sigma_{t-1}M_t^T(\Sigma_{t|t-1}+M_t\Sigma_{t-1}M_t^T)^{-1}M_t\Sigma_{t-1} \tag{c}\label{eq:woodbury_1}\\
&= \Sigma_{t-1}-\Sigma_{t-1}M_t^T\Sigma_t^{-1}M_t\Sigma_{t-1} \tag{d}\label{eq:expand_sigma}\\
&= \sigma_{t-1}^2\textbf{I}-\frac{\sigma_{t-1}^4}{\sigma_t^2}M_t^TM_t \tag{e}\label{eq:sub_iso}
\end{align}
We start from Eq.~\ref{eq:reverse_step_ddpm_matrix} derived in the previous section. In Eq.~\ref{eq:woodbury_1}, we used the Woodbury matrix identity $(A+UCV)^{-1}=A^{-1}-A^{-1}U(C^{-1}+VA^{-1}U)^{-1}VA^{-1}$ with $A=\Sigma_{t-1}^{-1}, U=M_t^T, C=\Sigma_{t|t-1}^{-1}, V=M_t$. In Eq.~\ref{eq:expand_sigma}, we substitute the value of $\Sigma_{t|t-1}$ from Eq.~\ref{eq:forward_step_ddpm_matrix}. Finally, in Eq.~\ref{eq:sub_iso} we substitute the isotropic values for $\Sigma_t$ and $\Sigma_{t-1}$.

\begin{align}
\mu_{t\rightarrow t-1} &= \Sigma_{t\rightarrow t-1}(\Sigma_{t-1}^{-1}\mu_{t-1}+M_t^T\Sigma_{t|t-1}^{-1}x_t) = \Sigma_{t\rightarrow t-1}\Sigma_{t-1}^{-1}\mu_{t-1} + \Sigma_{t\rightarrow t-1}M_t^T\Sigma_{t|t-1}^{-1}x_t \nonumber \\
&=(\Sigma_{t-1}-\Sigma_{t-1}M_t^T\Sigma_t^{-1}M_t\Sigma_{t-1})\Sigma_{t-1}^{-1}\mu_{t-1} +  \Sigma_{t\rightarrow t-1}M_t^T\Sigma_{t|t-1}^{-1}x_t\tag{f}\label{eq:sub_sigma_rev1}\\
&=(\mathbf{I}-\Sigma_{t-1}M_t^T\Sigma_t^{-1}M_t)\mu_{t-1} + \Sigma_{t\rightarrow t-1}M_t^T\Sigma_{t|t-1}^{-1}x_t \nonumber \\
&= (\mathbf{I}-\Sigma_{t-1}M_t^T\Sigma_t^{-1}M_t)\mu_{t-1} + (\Sigma_{t-1}^{-1}+M_t^T\Sigma_{t|t-1}M_t)^{-1}M_t^T\Sigma_{t|t-1}^{-1}x_t \tag{g}\label{eq:sub_sigma_rev2}\\
&=(\mathbf{I}-\Sigma_{t-1}M_t^T\Sigma_t^{-1}M_t)\mu_{t-1} + \Sigma_{t-1}^{-1}M_t^T(\Sigma_{t|t-1} + M_t\Sigma_{t-1}M_t^T)^{-1}x_t \tag{h}\label{eq:woodbury2}\\
&=(\mathbf{I}-\Sigma_{t-1}M_t^T\Sigma_t^{-1}M_t)\mu_{t-1} + \Sigma_{t-1}^{-1}M_t^T\Sigma_t^{-1}x_t \tag{i}\label{eq:expand_sigma_2}\\
&=\mu_{t-1} + \Sigma_{t-1}M_t^T\Sigma_t^{-1}(x_t-M_t\mu_{t-1})  \nonumber \\
&= \mu_{t-1} + \frac{\sigma_{t-1}^2}{\sigma_t^2}M_t^T(x_t-M_t\mu_{t-1}) \tag{j}\label{eq:sub_iso_2}
\end{align}

Starting from $\mu_{t\rightarrow t-1}$ in Eq.~\ref{eq:reverse_step_ddpm_matrix}, in Eq.~\ref{eq:sub_sigma_rev1} we substitute $\Sigma_{t\rightarrow t-1}$ from Eq.~\ref{eq:expand_sigma}, and then in Eq~\ref{eq:sub_sigma_rev2} we substitute $\Sigma_{t\rightarrow t-1}$ from Eq.~\ref{eq:reverse_step_ddpm_matrix}. In Eq.~\ref{eq:woodbury2}, we use a corollary of Woodbury identity $(A+UCV)^{-1}UC=A^{-1}U(C^{-1}+VA^{-1}U)^{-1}$, with the same substitution as described above. In Eq.~\ref{eq:expand_sigma_2}, we substitute the value of $\Sigma_{t|t-1}$ from Eq.~\ref{eq:forward_step_ddpm_matrix}, and finally, we substitute the isotropic values for $\Sigma_t$ and $\Sigma_{t-1}$.

\end{proof}

\twocolumn

{
    \small
    \bibliographystyle{ieeenat_fullname}
    \bibliography{main}
}

\end{document}